\newcommand{\etal}{\textit{et. al. }} 
\DeclareRobustCommand\onedot{\futurelet\@let@token\@onedot}
\def\@onedot{\ifx\@let@token.\else.\null\fi\xspace}
\def\eg{\emph{e.g}\onedot}
\def\etal{\emph{et al}\onedot}
\newcommand{\R}{\mathbb{R}} 
\newcommand{\SO}{SO(3)}
\renewcommand{\P}{{\bf P}} 
\newcommand{\Q}{{\bf Q}} 
\newcommand{\G}{{\bf G}} 
\newcommand{\V}{{\bf V}} 
\newcommand{\qobj}{q_{\rm obj}}
\newcommand{\pobj}{p_{\rm obj}}
\newcommand{\qgrp}{q_{\rm grp}}
\newcommand{\pgrp}{p_{\rm grp}}
\newcommand{\pfingerl}{p_{\rm{f,left}}}
\newcommand{\pfingerr}{p_{\rm{f,right}}}
\newcommand{\ui}{^{({\rm i})}} 
\newcommand{\uf}{^{({\rm f})}} 
\newcommand{\uj}{^{({\rm j})}} 
\newcommand{\uk}{^{({\rm k})}} 
\newtheorem{theorem}{Theorem}
\begin{document}
%
\title{Reorienting Objects in 3D Space Using Pivoting}
\author{Yifan~Hou,~
        Zhenzhong~Jia,~
        and~Matthew~T.~Mason,~\IEEEmembership{Fellow,~IEEE}

\thanks{The authors are with the Robotics Institute, Carnegie Mellon University, Pittsburgh, PA 15213, USA.
        {\tt\small \{yifanh, zhenzjia\}@cmu.edu, matt.mason@cs.cmu.edu}}}

\maketitle

\begin{abstract}
We consider the problem of reorienting a rigid object with arbitrary known shape on a table using a two-finger pinch gripper. Reorienting problem is challenging because of its non-smoothness and high dimensionality. In this work, we focus on solving reorienting using pivoting, in which we allow the grasped object to rotate between fingers. Pivoting decouples the gripper rotation from the object motion, making it possible to reorient an object under strict robot workspace constraints. We provide detailed mechanical analysis to the 3D pivoting motion on a table, which leads to simple geometric conditions for its stability. To solve reorienting problems, we introduce two motion primitives: \textit{pivot-on-support} and \textit{roll-on-support}, and provide an efficient hierarchical motion planning algorithm with the two motion primitives to solve for the gripper motions that reorient an object between arbitrary poses.
To handle the uncertainties in modeling and perception, we make conservative plans that work in the worst case, and propose a robust control strategy for executing the motion plan.
%
Finally we discuss the mechanical requirements on the robot and provide a "two-phase" gripper design to implement both pivoting grasp and firm grasp. We demonstrate the effectiveness of our method in simulations and multiple experiments. Our algorithm can solve more reorienting problems with fewer making and breaking contacts, when compared to traditional pick-and-place based methods.
\end{abstract}

\begin{IEEEkeywords}
Reorienting, regrasping, pivoting, motion planning.
\end{IEEEkeywords}

%
\IEEEpeerreviewmaketitle


\section{INTRODUCTION}
\label{sec:intro}

\IEEEPARstart{T}{here} are many things roboticists can learn from human manipulation, even when humans do not take advantage of the dexterity of the hand. For example, a human using the thumb and index finger in a pinch grasp can easily outperform an industrial robot with a pinch gripper in tasks such as picking and reorienting objects. Although the end effectors are similar, humans are better in at least the following two important ways.
First, humans utilize \textit{extrinsic dexterity} \cite{extrinsic} such as gravity, inertia forces and extrinsic contact forces in addition to joint forces.
Second, human hands are compliant. For example, we can hold and manipulate an object while leaning it on a table without crushing anything. As a result, human hands have a larger repertoire of motions to use.

\subsection{The Reorienting Problem} 
\label{sub:the_reorienting_problem}
The frequent appearance of making and breaking contacts in extrinsic dexterity introduces non-smooth mechanics and discrete decision variables. Together with the high dimensionality of the system (object plus robot), they make it challenging to find solutions fast and reliably for mainstream planning and control methods, such as trajectory optimization, sampling based planning or reinforcement learning.

One such example is object reorienting: quickly move an object between different 3D orientations. The reorienting problem is common in industrial applications such as polishing, soldering and assembling, where an object undergoes multiple procedures in sequence; and also home applications such as tidying a table. In these scenarios, the object may have a complicated shape. The robot needs to decide how to manipulate the object (pushing, grasping, etc), while staying within a constrained workspace.

For robots with simple grippers, traditional methods for reorienting simplify planning by only using pick-and-place motion \cite{lozano1987handey,lozano2014constraint,tournassoud1987regrasping,stoeter1999planning,cho2003complete,wan2015improving}. In pick-and-place, the manipulator rotates the object by grasping it firmly, then places it at a different stable pose. The process may repeat a few times before the goal pose is reached. The planning problem becomes totally kinematic and easier to solve. However, pick-and-place brings limits on the robot motion. For example, the robot gripper must rotate along with the object, which is impossible in a limited workspace and takes unnecessarily long time.

\subsection{Motion Primitives for Reorienting} 
\label{sub:use_motion_primitives}
Although pick-and-place has its limitations, it inspires us to solve the reorienting by decomposing it into smaller, well-defined, solvable problems. These sub-problems describe parametrized motions and are designed to be easily solvable, we call them \textit{motion primitives}. For example, single pick-and-place with one grasp is a motion primitive for reorienting. Once we can solve the motion primitive, we can further solve the original problem by run the motion primitive several times with suitable parameters. This step is high level and has much fewer variables to consider, so the computation time is reduced.

In this work, we design better motion primitives for reorienting objects by observing human motion. Even with a two finger pinch grasp, a human hand can reorient an object more elegantly with less hand and arm motion than pick-and-place reorienting. The key difference is that human hands allow and utilize slips between the object and fingertips. We summarize this behavior as a new motion primitive: \textit{pivot-on-support}. The object is pinch-grasped, but can rotate passively about the grasp axis. Meanwhile the object is in contact with the table under gravity. The motion primitive brings several benefits. Firstly, the gripper motion is decoupled from the rotation of the grasped object, making it possible to perform reorienting under stricter workspace constraints. For the same reason, there exist more efficient solutions for robot motion. Secondly, the contact on the table balance the object weight, so we can use the quasi-static assumption to simplify the modeling. pivot-on-support is more flexible than pick-and-place, yet it is easier to control than those more dynamic motion primitives such as throwing and catching \cite{extrinsic}, pivoting with gravity \cite{spoon2016adaptive} or inertia force \cite{lynchsliding,hou2016robust}.

Note that pivot-on-support is not always stable. A pivot grasp does not form a force closure; the object may break the contact with the table and fall over under gravity. To continue reorienting the object when pivoting is unstable, human hand would stop pivoting and switch to a firm non-slip grasp till pivoting becomes stable again. We use the non-slip grasp mode as another motion primitive for our planner, call it \textit{roll-on-support}. As the name suggests, we still maintain the contact with the table as the object rotates, so as to have a smooth transition between the two motion primitives.

We provide an algorithm to solve reorienting efficiently using the two motion primitives. The algorithm takes as input an object with arbitrary shape, as well as an arbitrary 3D initial pose and goal pose. At low level the algorithm plans a gripper trajectory that alternates between the two motion primitives to move the object, given one grasp location. On top of that, it performs graph search to plan multi-step reorientings and pick the best grasp locations. By analyzing the model of pivoting, we provide simple conditions to quickly check if a pivoting pose is stable, which enables fast computing. To further accelerate the online computation, we off-load grasp point selection and some collision checking off-line. Then our hierarchical approach can find a solution or declare infeasibility for a new problem in a couple of seconds. The off-line computation takes several minutes and only needs to be done once per object.

\subsection{Robustness from Planning, Control and Hardware} 
\label{sub:robustness}
The algorithm needs to know the object shape and mass property, which can be hard to obtain accurately. We do robust planning by modeling the uncertainties as bounded errors. Then the algorithm computes conservative plans that work in the worst case.

Still, robust planning alone is not enough for ensuring successful experiments. Picking a perfect value for an uncertainty bound is a tricky trade-off between robustness and feasibility. Instead of carefully tuning uncertainty parameters, we do planning using rough parameters then execute the motion plan robustly with hybrid force-velocity control, which reduces unexpected slips and robot crashes.
We demonstrate our planning and execution scheme in experiments with several real-life objects.

Our algorithm assumes the gripper hardware to be capable of implementing the two motion primitives. Specifically, in pivoting the object should rotate without translational slips between fingers. In this work we discuss the requirements and design principles for such grippers, and provide one simple "two-phase" finger design that can switch between the two motion primitives.

To demonstrate our method, we firstly compare our method with pick-and-place based reorienting method in simulations, and report statistics including number of solved problems and computation time. Then we demonstrate the motion plans in experiments with multiple objects.

This paper is and extension of our previous work on reorienting \cite{hou2018fast}. The contributions of this work are:
\begin{itemize}
    \item Mechanical analysis of pivoting that extends the planar analysis in \cite{hou2018fast} to 3D motion;
    \item Fast and robust motion planning for reorienting in 3D, where the gripper motion planning problem is formulated as a Quadratic Programming (QP).
    \item Hardware design of a two-phase gripper.
    \item Simulation and experiment verifications.
\end{itemize}

\section{RELATED WORK}
\label{sec:related_work}

\subsection{Generic Motion Planning} 
\label{sub:generic_motion_planning}
Theoretically, many existed generic motion planning methods can handle the problem of manipulating an rigid object with a gripper. However, none of them has satisfying performance.
Sampling based planning methods find a solution by building a tree or a graph of states. They can guarantee the existence of solution given enough time \cite{webb2013kinodynamic}. However, the process is usually too long, especially for a system with more than 6 DOFs (object pose and gripper pose in 3D space).
Trajectory optimization suffers from the non-smoothness and high dimensionality. Gradient doesn't exist, which disables powerful continuous solvers. Many work use linear complimentary constraints (LCP) to model the contact as continuous constraints, however, these constraints make the optimization problem close to ill-conditioned \cite{posa2014direct}. It's sensitive to the choice of initial trajectory, and it is hard to further optimize a solution. Existing methods often rely on overly simplified physics model and only demonstrate results in simulations \cite{mordatch2012contact,mordatch2012discovery,toussaint2018differentiable}, and still take minutes to find a solution. Reinforcement learning has also shown ability to solve a range of manipulation problems \cite{pinto2016supersizing,levine2016end}. However, one basic assumption in reinforcement learning is that sampling actions randomly can explore useful actions. This is only true for tasks that are generally stable. It is almost impossible for a task like flipping an object by pivoting. The sampled action has to stay on the right sub-manifold for some time steps, otherwise the object will drop and the exploration go back to the beginning.

\subsection{Object Reorienting} 
\label{sub:object_re_orienting}
Work on pick-and-place reorienting can date back to 1970s, when Richard Paul used a robot arm and a parallel gripper to pick-and-place a cube in correct orientation \cite{instant_insanity_video1971}. Since 1980s, the mainstream approaches would discretize the whole gripper-object state space with a "grasp-placement table" \cite{lozano1987handey,lozano2014constraint,tournassoud1987regrasping}. The set of grasps were selected off-line. A motion sequence can be obtained by back chaining from the goal during planning. The approach is then adopted and improved in several ways. Stoeter \etal computed the stable placements of the object and the discretized gripper motion, built the list of "Grasp-placement-grasp" triples for searching online \cite{stoeter1999planning}. Cho \etal stores collision-free states in a lookup table, then find intermediate placement incrementally by querying the table \cite{cho2003complete}. Wan \etal \cite{wan2015improving} and Xue \etal \cite{xue2008planning} utilize graph structure to represent feasible gripper and object poses for efficient online searching. Wan \etal also decompose the search of pick-and-place sequence from the search of grasps for better efficiency. Cao \etal \cite{cao2016empirical,cao2016analyzing} extended the graph structure to consider fixtures of different shapes instead of a plain table. In those work, only stable placements are discretized and stored as entries in the table/nodes on the graph. In our previous work, we extended the graph to consider arbitrary initial and goal object poses, yet maintained a concise graph by leverage more computations in the edges.

\subsection{Regrasping} 
\label{sub:regrasping_and_re_orienting}
Closely related to reorienting, \textit{regrasping} problems focus on changing the pose of an object with respect to the gripper, while reorienting sets the goal in world frame. Since the transformation between the world frame and the gripper frame is usually known, solutions to the two problems can be used for each other. Dafle \etal demonstrated by hand-coded trajectories that a robot with a simple gripper is able to perform dexterous regraspings by utilizing the extrinsic force resources \cite{extrinsic}.  Dafle and Rodriguez then used LCP trajectory optimization for planning continuous regrasping motions \cite{prehensile} with extrinsic contacts, as well as sampling-based planning for picking discrete modes \cite{chavan2017sampling,chavan2018hand}. Cruciani \etal extended the graph structure in \cite{wan2015improving} to consider both release-and-regrasp and prehensile pushing style in hand manipulation. They further extended the method for incremental data to do in hand manipulation with unknown objects \cite{cruciani2019hand}.




\subsection{Pivoting as a Motion Primitive} 
\label{sub:motion_primitives_for_re_orienting}
Pivoting has been used as a motion primitive with several kinds of extrinsic actuations, including inertia force \cite{hou2016robust,lynchsliding}, gravity \cite{brock,rao1996complete,Sintov_Robotic_2016,spoon2016adaptive} as well as extrinsic contact forces \cite{Inaba_pivoting,holladay2015pivot,terasaki1998motion,yoshida2010pivoting}. Using inertia force or gravity with grip force control can improve the speed of object motion, however, it poses high requirement on the bandwidth of the robot itself \cite{hou2016robust,spoon2016adaptive,Sintov_Robotic_2016}. In \cite{rao1996complete}, the robot lifted up the object and let the object rotate passively under gravity to the desired pose, assuming a suitable grasp along the line of gravity exists. Closely related to our work, Holladay \cite{holladay2015pivot} and Terasaki \cite{terasaki1998motion} used pivot-on-support for reorienting objects. In both work they analyzed pivoting as a swinging between two stable placements, which require good knowledge of the object inertia model and suitable grasp location.

\subsection{Gripper Design for Pivoting} 
There are many different ways to implement pivoting. A parallel/pinch gripper can do pivoting by regulating inertia force \cite{lynchsliding} and gripping force \cite{spoon2016adaptive}. However, it's more robust to do pivoting using a customized finger design that makes it easy to rotate without slipping. The Freddy II robot is probably the first to use a rotating mechanism with active motor control on its fingertips \cite{ambler1973versatile}. To make the design compact, passive rotation mechanisms were widely used. Several work used a pair of point contacts on the object to rotate it without slip \cite{Inaba_pivoting,holladay2015pivot,chavan2015two-phase,chavan2018pneumatic}. People also made face contacts rotatable by adding a rotation shaft to the fingertips of a parallel gripper \cite{carlisle1994pivoting,terasaki1998motion,hou2016robust}, such that the grasped object can rotate along with the fingertips about the grasp axis.

As we mentioned before, pivoting along is not enough for reorienting objects. The gripper must be able to switch between pivoting and firm grasping. The transition can be done by deforming the soft fingertip \cite{spoon2016adaptive}, but it is hard to model the change of friction force precisely.
Terasaki \etal \cite{terasaki1998motion} designed a two-phase finger by surrounding a rotational fingertip with a fixed fingertip. The fixed fingertip will contact the object and stop the rotation if the grasp distance decreases.
Dafle \etal \cite{chavan2015two-phase,chavan2018pneumatic} proposed another two-phase fingertip design that can switch between a point contact and a face contact, so as to transit between pivoting and firm grasping. In this work we provide a new design that combine some of the merits of previous designs.




\section{PROBLEM DESCRIPTION}
\label{sec:problem_description}
Consider a rigid object described by a 3D mesh. Denote $^O\P\in \R^{3\times N}$ as the points on its convex hull measured in the object frame $O$. Its center of mass (COM) in $O$ frame is given by $^OC\in \R^{3}$.
Denote $^W\qobj\in \SO, ^W\pobj\in \R^3$ as the position and orientation of the object in world frame $W$.
Consider a parallel gripper. The origin of the gripper frame is fixed at the middle point between the two fingertips. The $Z$ axis points towards the palm, $X$ axis parallels the grasp axis. We use $^W\qgrp \in \SO$ and $^W\pgrp$ to describe the pose of the gripper frame.

Consider a parallel gripper. Denote $\G$ as a set of pre-computed grasping locations for the object. We ignore grasp points on corners and only consider grasp points on facets, so a parallel firm grasp is possible. There are plenties of methods available for grasp sampling \cite{saut2010planning,wan2015improving}. Each grasp location $^Og \in\G$ is distinguished by two grasp points on the object surface, $^O\pfingerl$ and $^O\pfingerr$. We assume that the grasp points do not change during switching between pivoting and rolling, and that the object rotation axis during pivoting is the line between the two grasp points.
\begin{figure}[h]
    \centering
    \includegraphics[width=0.35\textwidth]{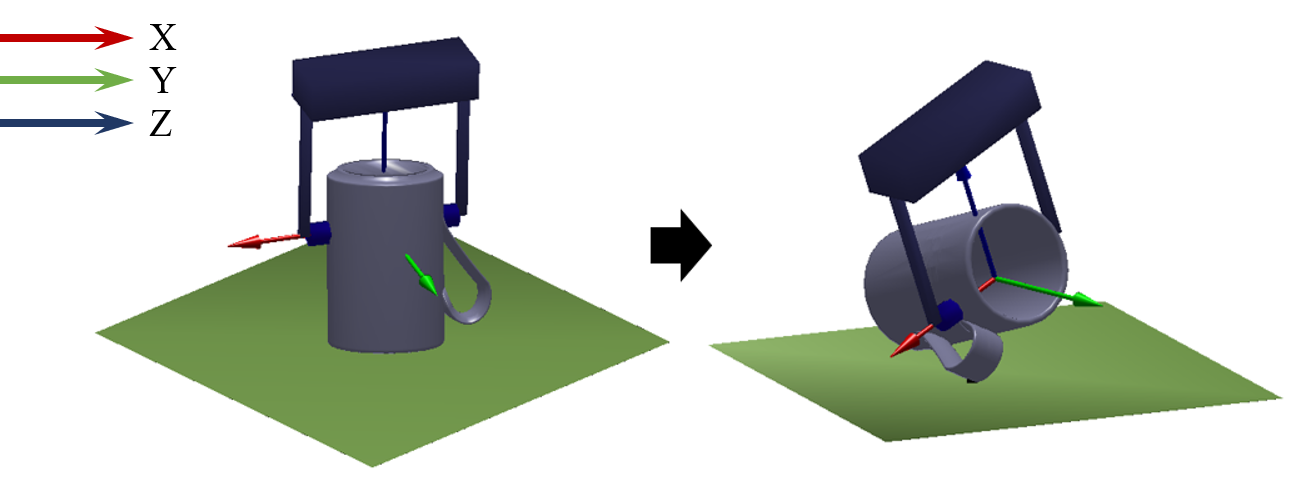}
    \caption{Example of a 3D reorienting problem. The arrows show the axes of the gripper frame.}
    \label{fig:problem}
\end{figure}

A reorienting problem is defined as follows. Given user-defined initial and final object poses $^W\qobj\ui, ^W\qobj\uf \in \SO, ^W\pobj\ui, ^W\pobj\uf \in \R^3$, plan a sequence of gripper motions to move the object from the initial pose to the final pose using the grasps from $\G$. An illustration of the problem is shown in Fig. \ref{fig:problem}.

\begin{figure}[t]
    \centering
    \includegraphics[width=0.45\textwidth]{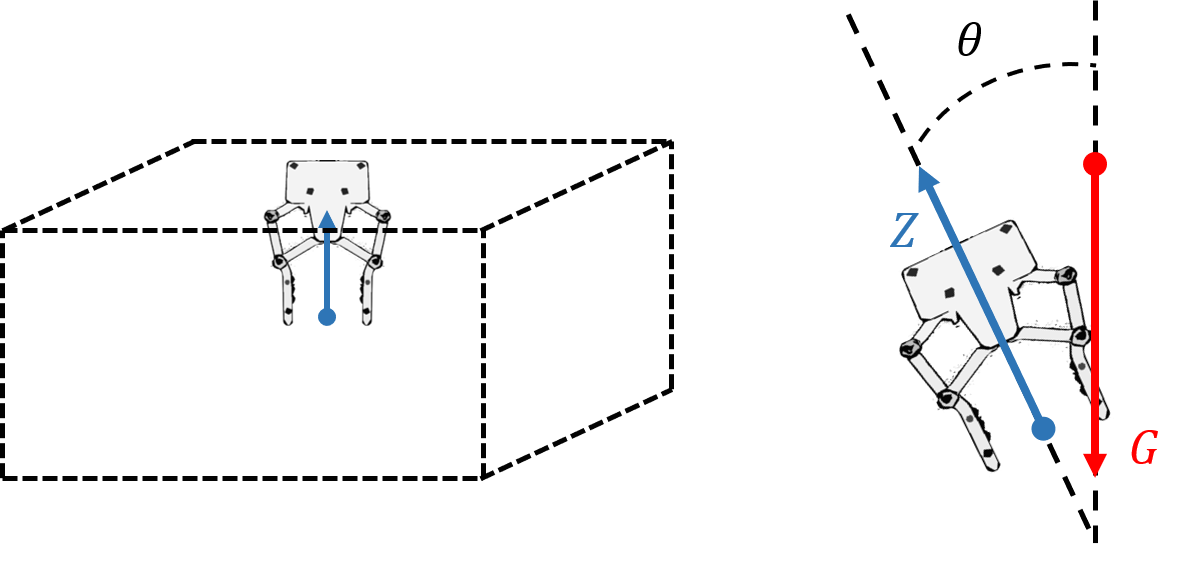}
    \caption{The bounding box and tilting angle workspace constraints. Blue arrow shows the Z axis of gripper frame, red arrow shows the direction of gravity.}
    \label{fig:dexterous_workspace}
\end{figure}

Several constraints need to be considered in the planning problem. First, there should be no collision between the object and any part of the gripper other than the fingertip. Second, the gripper must stay in a preferred workspace so as to avoid  singularities as well as collision with the environments. The workspace constraint we use in this paper is defined based on a common picking system, as shown in Fig. \ref{fig:dexterous_workspace}: the gripper position $^W\qgrp$ must stay within a bounding box described by two corner points $^Wp^{(\min)}$ and $^Wp^{(\max)}$, while the gripper tilting angle $\theta$ has an upper limit $\theta_{\max}$. Zero tilting angle is the "palm down" pose, during which the $Z$ axis of the grasp frame points upward. Other form of workspace constraints can be incorporated into our algorithm as long as they are linear on the gripper pose.


\section{Modeling for Reorienting}
\label{sec:modeling}
In this section we introduce the motion primitives for reorienting, and provide thorough mechanical analysis. In the end we provide two theorems about the stability of pivoting and the contact mode between the object and the table, which will be the foundation of the planning method in the next section.

We start with a list of assumptions made in our modeling.
\begin{enumerate}
  \item The scenario includes one object on a horizontal table, and one robot with a parallel gripper.
  \item The object motion is quasi-static, i.e. inertia forces are negligible.
  \item In pivoting, the object only rotates about the grasp axis in gripper frame.
  \item The friction between the object and the table can be modeled by Coulomb friction and stiction.
\end{enumerate}
%
We model modeling uncertainties including friction, object COM position, actual grasp locations with bounded error. The workspace constraints make sure the robot will not hit singularity and have no collision with the environment.

\subsection{Motion Primitives} 
\label{sub:motion_primitives}
\begin{figure}[h]
    \centering
    \includegraphics[width=0.4\textwidth]{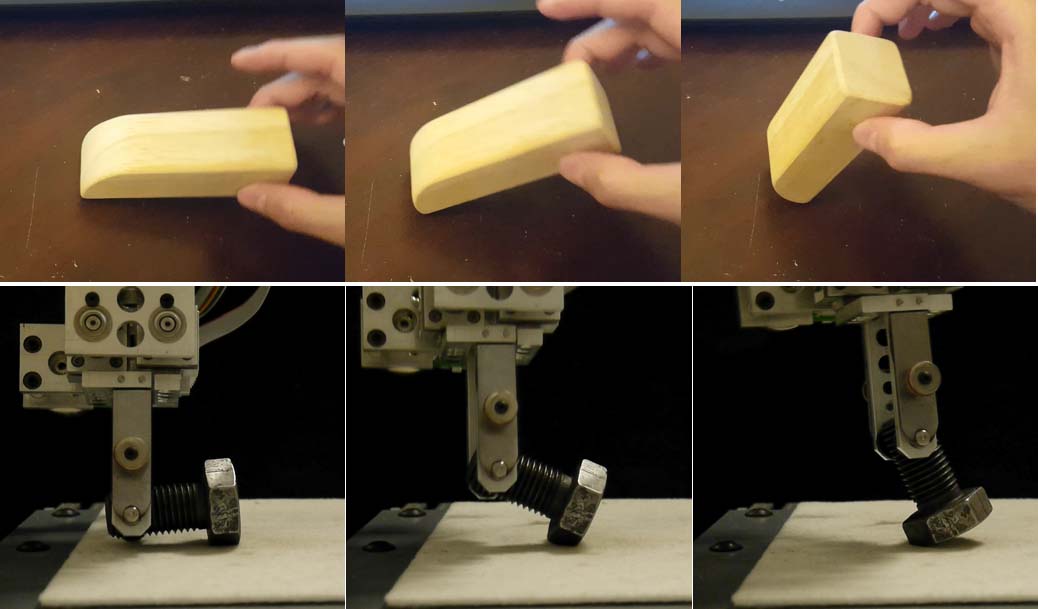}
    \caption{Human and robot doing pivot-on-support.}
    \label{fig:pivoting_motion_primitive}
\end{figure}
We learn two motion primitives from human pinch grasps. The first one is \textit{pivot-on-support} (Fig. \ref{fig:pivoting_motion_primitive}. The gripper grasps two antipodal points on the object surface, while allowing the object to rotate passively about the grasp axis. The grasped object also maintains contact(s) with the table under gravity. We make no assumptions on whether the object is sticking or slipping on the table. Given a gripper pose, the possible object poses have one rotational freedom. If we further know the contact location(s) between the object and table, we can determine the unique object pose kinematically.

Note that pivoting can be unstable. The contact between the object and the table could break under gravity for certain configurations. If we continue to rotate the object, it will fall over and break the quasi-static assumption. Looking for a stable rotation trajectory is tricky. Instead, we continue rotating the object when pivoting is unstable, but with a firm grasp instead to avoid falling. This is our second motion primitive \textit{roll-on-support} in which the gripper grasp the object firmly so that the object motion follows the gripper exactly. We still let the object contact the table, so that we can switch between the two motion primitives easily. To summarize, we let the object rotate directly towards the goal, during which the gripper uses pivot-on-support whenever possible, and switches to roll-on-support otherwise.
In the following, we use \textit{pivoting} and \textit{rolling} to refer to the two motion primitives.

In Sec. \ref{sec:evaluation}, we provide the fingertip design we used to switch between the two motion primitives. The choice of motion primitive, $m\in\{{\rm rolling}, {\rm pivoting}\}$, is a discrete decision variable that the planning algorithm needs to compute.


\subsection{Mechanics of Pivot-on-support}
\label{sub:mechanics_of_pivoting}

%
\begin{figure}[ht]
    \centering
    \includegraphics[width=0.4\textwidth]{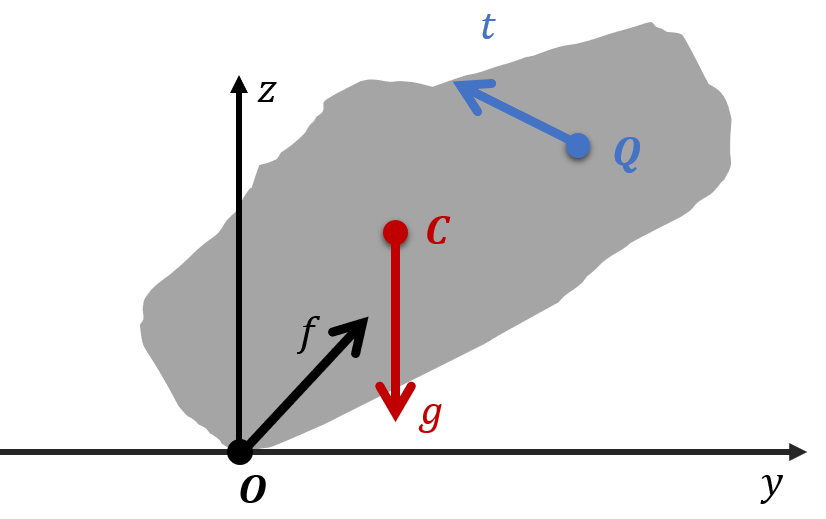}
    \caption{Forces on the object during pivoting, viewing from the direction of the grasp axis. Points $O$, $C$, $Q$ are the contact point, center of mass and grasp point, respectively. Forces $f$, $g$, $t$ are contact reaction force, gravity and force from gripper, respectively.}
    \label{fig:pivoting_mechanics}
\end{figure}
Although it's straightforward to compute a gripper pose from a desired object pose, in order to use pivoting we still need to consider two more things about its mechanics. Firstly we need to know when is pivoting stable/unstable, so that we can choose the right  motion primitive to use. Secondly we need to know when the object can slide on the table, during which the robot gripper may translate freely in the horizontal directions. This make it easier for the robot to stay in a confined workspace.

To analyze the stability of 3D pivoting, consider the object motion in the plane perpendicular to grasp axis, as shown in Fig.~\ref{fig:pivoting_mechanics}. Call this plane the \textit{pivoting plane}. Assign a frame to the pivoting plane by setting the origin $O$ at the contact point between the object and the table, align the X axis with the grasp frame X axis, align the Y axis with the table. Z axis is determined by the right hand rule. Call this frame the \textit{pivoting plane frame}.

We do not assume the plane is fixed during pivoting (as we did before in \cite{Hou2018FastPlanning}), i.e. the gripper could have any 3D motion. Previous works on mechanics of pivot-on-support restricted the points O, C and Q to be co-linear, and focused on computing the required pushing force \cite{Inaba_pivoting}. We remove this restriction and perform stability analysis for all possible pivoting scenarios.

For a quasi-static system, at any time the force system $(f, g, t)$ must be in equilibrium ($O_y = 0$):
\begin{equation}
  \label{eq:force}
  \begin{array}{rll}
  f_y + t_y& = &0, \\
  f_z + t_z& = &g, \\
  C_yg+Q_zt_y& = &Q_yt_z.
  \end{array}
\end{equation}
Inertia force is ignored due to the quasi-static assumption. Eliminate term $t_y, t_z$ we have
\begin{equation}
\label{eq:friction}
  f_z=\frac{Q_z}{Q_y}f_y + (1-\frac{C_y}{Q_y})g.
\end{equation}
Denote $\mu$ as the Coulomb friction coefficient between the object and the table. The contact reaction force $f$ must stay within the friction cone:
\begin{equation}
  \label{eq:coulumbs_law}
  |f_y|\le \mu' f_z.
\end{equation}
Note here we use the symbol $\mu'$ instead of $\mu$ to describe the influence of a tilted pivoting plane. The cone described by $\mu'$ is the projection of the actual 3D friction cone onto the pivoting plane. In this work we assume the gripper tilting angle $\theta$ satisfies $\theta \le \tan^{-1}\frac{1}{\mu}$,
otherwise the projection would no longer be a cone.

The pivoting system is stable as long as a solution exists for the force equilibrium (\ref{eq:friction}) and friction constraint (\ref{eq:coulumbs_law}). Infeasibility means an equilibrium is not possible and the object will topple over.
We can discuss the existence of solutions by looking at the $f_z$ - $f_y$ curve along with the friction cone.
\begin{figure*}[t]
    \centering
    \includegraphics[width=\textwidth]{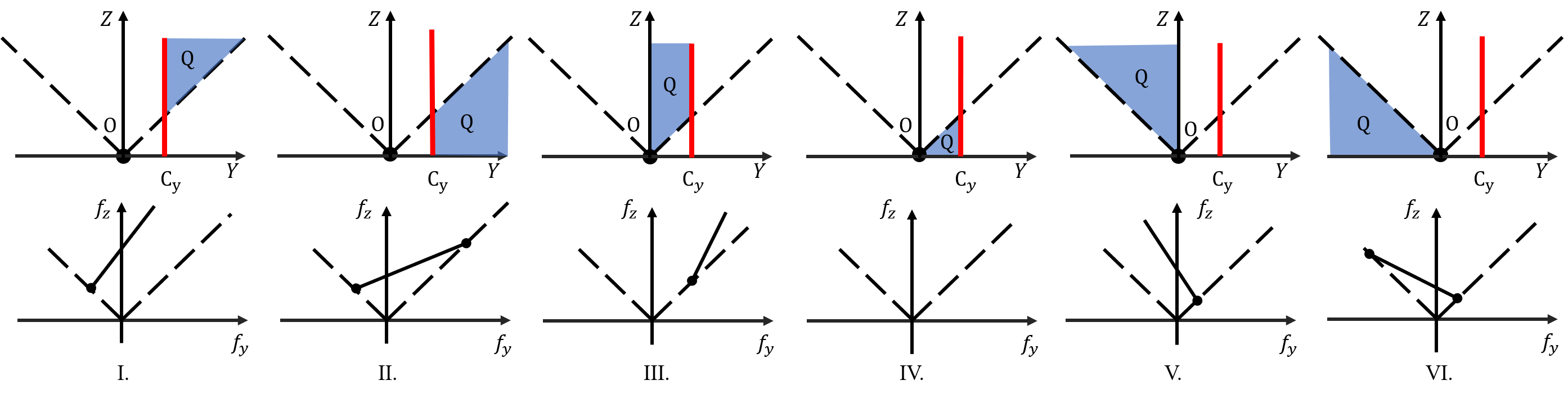}
    \caption{All six possible scenarios and the corresponding solution sets of $f$. In the top figure for each scenario, $O$ is shown as the black dot, possible locations of $C$ are drawn by the red line, possible locations of $Q$ are denoted by the blue shaded area. Solid lines in the bottom figure show the possible value of $f_y, f_z$ under force equilibrium. Dashed lines are the friction cones.}
    \label{fig:pivoting_situations}
\end{figure*}

Depending on the locations of $O, C$ and $Q$ in the pivoting plane, we have six possible situations illustrated by the six columns in Fig.~\ref{fig:pivoting_situations}. In each column, the top figure shows the positions of $C$ and $Q$ relative to $O$ for that situation. The bottom figure shows the corresponding solution set of $f$. In all figures, we also show the friction cones in black dashed line.

When we read Fig.~\ref{fig:pivoting_situations}, it's important to remember the gripper moves in 3D space. The contact force during sliding may project to anywhere inside the friction cone in the pivoting plane, which implies that although equation \ref{eq:coulumbs_law} still holds, a strict inequality no longer means the contact can not slide.

The robot actions, i.e. the gripper velocities, do not directly control the force $t$. Instead, the gripper velocity command only determines in what motion the contact $O$ tend to move. Then we can use Fig.~\ref{fig:pivoting_situations} to tell what will happen.

In scenario I and V, if the contact $O$ tend to move towards $Q$ (roughly means the gripper pulling away from $O$), from the figure we know the friction $f_y$ will be at most the intersection of the friction cone and the line of $f$. So the contact is sliding. However, if the contact $O$ tend to move away from $Q$ (roughly means the gripper push against the contact $O$), from the figure we know the friction force exists and never intersects the friction cone, which means the contact may be sticking. If the contact $O$ is not moving, the force $f$ could be anywhere on the solid line above the friction cone, so the contact may be sticking. If the gripper only moves in the pivoting plane, all the ``may be sticking'' here becomes strictly sticking.

Similarly in scenario II and VI, if the contact $O$ tend to move, the object will slide with limited friction. If the contact $O$ does not move, the object may be sticking.

In scenario III, if the contact $O$ tend to move towards $Q$, there is no solution in equilibrium. If the contact $O$ tend to move away from $Q$, contact force $f$ will either be on the only feasible point on the friction cone, or be infeasible, depending on the exact force from gripper. We ignore this stable solution since it is extremely sensitive to the gripper force and very hard to maintain. Finally if the contact $O$ is not moving, a stable, ``may be sticking'' solution exists if the gripper is pushing towards $O$ with enough force. This solution corresponds to the solid line above the friction cone.

In scenario IV, there is no solution in equilibrium no matter how we move the gripper. The analysis above is summarized in table \ref{tab:contact_states}.
\begin{table}[]
\caption{The contact state under different robot motions for each scenario.}
\label{tab:contact_states}
\begin{tabular}{|l|l|l|l|}
\hline
       & O moves towards Q & O moves away from Q   & O is static                                                                  \\ \hline
I, III & Sliding           & Impossible (be stuck) & \begin{tabular}[c]{@{}l@{}}Sliding\\ or sticking\end{tabular}                \\ \hline
II, VI & Sliding           & Sliding               & \begin{tabular}[c]{@{}l@{}}Sliding\\ or sticking\end{tabular}                \\ \hline
III    & Unstable          & Unstable              & \begin{tabular}[c]{@{}l@{}}Sliding or\\ sticking or \\ unstable\end{tabular} \\ \hline
IV     & Unstable          & Unstable              & Unstable                                                                     \\ \hline
\end{tabular}
\end{table}
In all scenarios except III and IV, a solution exists no matter how the gripper moves. These observations lead to the following sufficient condition for stability:
\begin{theorem}
\label{thm:stability}
A stable solution of the pivoting system (\ref{eq:force}) (\ref{eq:coulumbs_law}) exists as long as the location of $Q$, $C$ and $O$ in the pivoting plane satisfies:
\begin{equation}
  \label{eq:stability_1}
  (Q_y-C_y)\cdot(Q_y-O_y) > 0
\end{equation}
i.e. in the $Y$ direction, the gripper position $Q$ is not in between the contact point $O$ and the center of mass $C$.
\end{theorem}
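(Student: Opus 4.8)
The plan is to recast the stability question as a two-dimensional feasibility problem in the $(f_y,f_z)$ plane and then show that hypothesis (\ref{eq:stability_1}) forces a trivially feasible point. First I would eliminate the gripper force from the equilibrium system: solving the first two equations of (\ref{eq:force}) for $t_y=-f_y$ and $t_z=g-f_z$ and substituting into the moment equation yields exactly the affine relation (\ref{eq:friction}). Hence the admissible contact forces are the intersection of a single line $L$ in the $(f_y,f_z)$ plane with the friction cone $\mathcal{F}=\{(f_y,f_z):|f_y|\le\mu'f_z\}$, a convex cone with apex at the origin contained in the half-plane $f_z\ge0$ (the inequality (\ref{eq:coulumbs_law}) already enforces $f_z\ge0$, i.e. unilaterality of the table contact). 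Pivoting is stable exactly when $L\cap\mathcal{F}\neq\emptyset$.

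The key observation is to track the $f_y=0$ intercept of $L$. Writing (\ref{eq:friction}) as $f_z=\frac{Q_z}{Q_y}f_y+b$ with $b=\bigl(1-\frac{C_y}{Q_y}\bigr)g=\frac{Q_y-C_y}{Q_y}\,g$, and recalling that the pivoting-plane frame is anchored so that $O_y=0$, the hypothesis $(Q_y-C_y)(Q_y-O_y)>0$ reads $(Q_y-C_y)Q_y>0$, which (since $g>0$) is equivalent to $b>0$. I would then simply check that the point $(f_y,f_z)=(0,b)$ lies on $L$ by construction and satisfies $|0|=0<\mu'b$, so it lies in the \emph{interior} of $\mathcal{F}$; therefore $L\cap\mathcal{F}\neq\emptyset$ and a stable equilibrium exists. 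Geometrically this is precisely the situation in scenarios I, II, V and VI of Fig.~\ref{fig:pivoting_situations}, where the solid line crosses the $f_z$-axis strictly above the apex of the cone, and it corresponds exactly to $Q$ not lying between $O$ and $C$ along $Y$.

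The core argument is thus a one-point check; the parts that actually need care are the bookkeeping and the degenerate configurations. I would make explicit that the elimination producing (\ref{eq:friction}) requires $Q_y\neq0$, and note that the hypothesis already excludes $Q_y=0$, since then $(Q_y-C_y)(Q_y-O_y)=0$ rather than positive. I would also record that $\mathcal{F}$ is a genuine cone (not a degenerate half-plane or the whole plane) exactly under the standing assumption $\theta\le\tan^{-1}\frac{1}{\mu}$ on the tilting angle, so that the projected coefficient $\mu'$ is finite and positive. Finally I would emphasize that the statement is only a sufficient condition: when $b\le0$ the line may or may not clip $\mathcal{F}$ depending on the slope $Q_z/Q_y$ and on $\mu'$ (this is the content of scenarios III and IV), so no converse is claimed, and none is needed — the planner uses (\ref{eq:stability_1}) as a fast, conservative stability test.
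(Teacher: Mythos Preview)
Your argument is correct and is in fact cleaner than what the paper does. The paper proves the theorem by pointing back to an exhaustive six-scenario case analysis carried out in Fig.~\ref{fig:pivoting_situations}: it identifies that condition~(\ref{eq:stability_1}) is exactly the union of scenarios I, II, V and VI, and then invokes the prior discussion to conclude that in each of these scenarios the line of admissible $(f_y,f_z)$ intersects the friction cone. Your route bypasses the enumeration entirely: you observe that with $O_y=0$ the hypothesis is equivalent to the $f_z$-intercept $b=(1-C_y/Q_y)g$ being strictly positive, and then the single point $(0,b)$ witnesses $L\cap\mathcal{F}\neq\emptyset$. This is a strictly more elementary feasibility certificate, and it also makes transparent why the condition is independent of $\mu'$ and of the slope $Q_z/Q_y$. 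The paper's longer case analysis is not wasted, however: the same scenario breakdown is what drives Theorem~\ref{thm:sliding} (the sliding/sticking distinction), which genuinely requires knowing where the line meets the boundary of the cone, not merely that it meets the cone somewhere. So your proof buys brevity and self-containment for Theorem~\ref{thm:stability}, while the paper's buys reusable structure for the subsequent contact-mode reasoning.
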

\begin{proof}
The condition corresponds to I, II, V and VI in Fig.~\ref{fig:pivoting_situations}. As per the analysis above,  the force equilibrium constraints have solutions in all these cases.
\end{proof}
Condition (\ref{eq:stability_1}) is not necessary since a solution exists for scenario III. We do NOT use this solution because it is hard to implement. On the one hand the solution requires sticking with certain force, which is hard to implement precisely. Note the condition described in theorem \ref{thm:stability} has no requirement on force. On the other hand it is hard to tell whether we are in III or IV since the friction modeling could have large errors. On the contrary, the condition in theorem \ref{thm:stability} has nothing to do with the friction coefficient between the object and the table.

We can also come up with a sufficient condition for sliding on the table from Fig.~\ref{fig:pivoting_situations}:
\begin{theorem}
\label{thm:sliding}
When the stability condition (\ref{eq:stability_1}) for pivoting is satisfied, the object can slide on the table without sticking
\begin{itemize}
  \item if the grasp position $Q$ is outside of the friction cone;
  \item or if the proposed gripper motion will move the contact $O$ towards $Q$.
\end{itemize}
\end{theorem}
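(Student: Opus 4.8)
The plan is to read Theorem~\ref{thm:sliding} off the six-scenario analysis behind Fig.~\ref{fig:pivoting_situations} and Table~\ref{tab:contact_states}, after making the relevant geometry explicit. In the $(f_y,f_z)$-plane the force-equilibrium line~(\ref{eq:friction}) has direction vector $(Q_y,Q_z)$, hence is parallel to $\overrightarrow{OQ}$ in the pivoting plane; and since the stability condition~(\ref{eq:stability_1}), which reads $(Q_y-C_y)Q_y>0$ once $O_y=0$ is used, forces $1-C_y/Q_y>0$, this line crosses the axis $f_y=0$ at the height $(1-C_y/Q_y)g>0$, a point in the open interior of the friction cone $|f_y|\le\mu' f_z$. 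By the symmetry $Y\mapsto-Y$ I would fix $Q_y>0$, so that~(\ref{eq:stability_1}) becomes $C_y<Q_y$, ``$O$ driven toward $Q$'' means sliding in $+Y$ (friction $f_y<0$ by maximum dissipation), and ``$O$ driven away from $Q$'' means sliding in $-Y$ (friction $f_y>0$).

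The crux is to control how the equilibrium line leaves the friction cone on each side of $f_y=0$. Starting from the interior point $(0,(1-C_y/Q_y)g)$ and moving so that $f_y$ decreases, the line always exits the cone through the wall $f_y=-\mu' f_z$ at a point with $f_z>0$ (one checks this exit occurs before $f_z$ reaches $0$, using only $Q_y>0$ and the positivity of the intercept); so the branch $f_y<0$ of the feasible set is a \emph{bounded} segment ending on a cone wall. Moving the other way, so that $f_y$ increases, the line exits through $f_y=\mu' f_z$ iff $Q_y>\mu' Q_z$, i.e.\ iff $Q$ lies outside the friction cone; if $Q$ lies inside the cone the branch $f_y>0$ is instead an unbounded ray staying in the cone. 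The first picture is scenario II/VI; the second is scenario I/V.

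Now the two bullets follow. \emph{Second bullet:} if the gripper drives $O$ toward $Q$, a quasi-static solution with the contact slipping exists — the friction component has the right sign $f_y<0$ and satisfies $|f_y|\le\mu' f_z$ because the line enters the cone interior at $f_y=0$ — and the object cannot instead jam, because the admissible $f_y<0$ is bounded, so a sufficiently forceful push overcomes the limiting friction. Hence the contact slides; this holds in all of I, II, V, VI, matching the ``Sliding'' entries in the first column of Table~\ref{tab:contact_states}. \emph{First bullet:} if moreover $Q$ is outside the friction cone we are in II/VI, where by the previous paragraph the admissible $f_y$ is bounded on \emph{both} sides of $0$; so whichever way the gripper drives $O$, the same argument gives a consistent slipping solution and rules out jamming. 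Since during pivoting the object is rotating about the grasp axis, $O$ is generically in motion, so the robot may always let the object slide on the table rather than stick.

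The step I expect to be the real obstacle is not this line-and-cone bookkeeping but the passage between the 3D contact and its projection onto the pivoting plane. As the remark after~(\ref{eq:coulumbs_law}) stresses, a 3D sliding contact projects to a force lying merely \emph{inside} the 2D cone, not on its boundary, so ``the object can slide'' must be read as ``a quasi-static solution compatible with slipping exists,'' and one must verify that the 3D slip velocity and the 3D friction (on the 3D cone boundary, opposing the slip) can be chosen so that their projections are consistent with the $(f_y,f_z)$ relations above and with the stipulated direction of motion of $O$. I would also want a short explicit calculation to confirm the exact matching between the inside/outside-the-cone dichotomy and Fig.~\ref{fig:pivoting_situations}'s scenario labels, rather than relying on the figure alone.
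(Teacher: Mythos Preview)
Your proposal is correct and follows the same route as the paper's proof: both read the theorem off the six-scenario case analysis summarized in Fig.~\ref{fig:pivoting_situations} and Table~\ref{tab:contact_states}, identifying the first bullet with scenarios II/VI (friction bounded on both sides) and the second bullet with the bounded-$f_y$ branch present in all of I, II, V, VI. Your version is considerably more explicit than the paper's two-sentence proof---you actually compute where the equilibrium line~(\ref{eq:friction}) meets the cone walls and why condition~(\ref{eq:stability_1}) forces a positive intercept---whereas the paper simply points at the figure and asserts that ``the frictions $f_y$ in both directions are bounded.''
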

\begin{proof}
Under condition (\ref{eq:stability_1}), $Q$ being outside of friction cone corresponds to scenario II and VI in Fig.~\ref{fig:pivoting_situations}, in which the frictions $f_y$ in both directions are bounded, so sticking is not possible.
In each of I, II, V and VI, the friction $f_y$ is bounded when the gripper moves away from $O$.
\end{proof}
The two theorems make it possible to design simple yet effective planning algorithms.

\subsection{Simplified Mesh Model} 
\label{sub:simplified_mesh_model}
In practise, the location of the contact point $O$ is computed as the lowest point on the object mesh model. So the planning algorithm presented in the next section involves manipulating the convex hull of the object mesh model. A  delicate model with thousands of facets will slow down the algorithm. We instead work with a simplified (triangle counts reduced) mesh model, and plan robustly against the modeling error.

Denote $^W\P, {^W\hat \P}$ as the vertices of the mesh model convex hull and the simplified mesh model convex hull, respectively. Denote $d_H(,)$ as the Hausdorff distance between two meshes:
\begin{equation}
  d_H(\P, \Q) = {\max\{\,\sup _{p\in \P}\inf _{q\in \Q}d(p, q),\,\sup _{q\in \Q}\inf _{p\in \P}d(p, q)\,\}{\mbox{,}}\!}
\end{equation}
In the following we simplify $d_H(^W\P, {^W\hat\P})$ as $d_H$.
The following theorem determines the range of possible locations of the contact point $O$:
\begin{theorem}
\label{thm:mesh_simplification}
Denote $a\in {^W\hat \P}$ as the bottom point on the simplified mesh, i.e.
$$ {a_z} = \mathop {\min }\limits_{\hat p \in {^W{\hat \P}}} {\hat p_z} $$
Then the actual contact point $e\in {^W\P}$ on the original mesh, defined by
$$ {e_z} = \mathop {\min }\limits_{p \in ^W\P} {p_z} $$
must be within the following set:
$$e \in \left\{ {{B_{{d_H}}}({\hat p_i})\ \ |\ \ {\hat p_i} \in {^W\hat\P},\;\;\,{\hat p_{iz}} \le {a_z} + 2{d_H}} \right\} $$
$B_r(p)$ denotes the ball of radius $r$ centered at point $p$.
\end{theorem}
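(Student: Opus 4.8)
\emph{Proof strategy.} The plan is to obtain the claim directly from the definition of the Hausdorff distance combined with a two-step triangle-inequality estimate on the $z$-coordinate, reading $^W\P$ and ${^W\hat\P}$ as the finite vertex sets of the two convex hulls. Two elementary observations set up the argument: first, since both vertex sets are finite, every infimum appearing in the definition of $d_H$ is actually attained, so it yields a genuine vertex rather than an approximate witness; second, the lowest point of a convex polytope is always attained at a vertex, so the points $a$ and $e$ defined in the statement really are the minimizers of the $z$-coordinate over the simplified hull and the original hull, respectively.

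First I would exhibit the simplified-hull vertex whose ball contains $e$. From $d_H = d_H(^W\P,{^W\hat\P})$ we have $\sup_{p\in{^W\P}}\inf_{\hat p\in{^W\hat\P}} d(p,\hat p)\le d_H$; specializing to $p=e$ and using finiteness of ${^W\hat\P}$, there is $\hat p_i\in{^W\hat\P}$ with $d(e,\hat p_i)\le d_H$, i.e.\ $e\in B_{d_H}(\hat p_i)$, which is the first half of the membership condition. Next I would bound $\hat p_{iz}$. Since the gap between $z$-coordinates is dominated by Euclidean distance, $\hat p_{iz}\le e_z + d(e,\hat p_i)\le e_z + d_H$, so it suffices to prove $e_z\le a_z + d_H$. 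For this I apply the Hausdorff bound in the other direction, $\sup_{\hat q\in{^W\hat\P}}\inf_{p\in{^W\P}} d(p,\hat q)\le d_H$, to the bottom vertex $\hat q=a$: there is some $p^\ast\in{^W\P}$ with $d(a,p^\ast)\le d_H$, hence $p^\ast_z\le a_z + d_H$; and because $e$ minimizes the $z$-coordinate over $^W\P\ni p^\ast$, we get $e_z\le p^\ast_z\le a_z + d_H$. Chaining the two inequalities gives $\hat p_{iz}\le e_z + d_H\le a_z + 2d_H$, so $\hat p_i$ satisfies both defining conditions of the set in the statement, and $e$ lies in it.

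I do not expect a genuine obstacle here, since the estimate is only a two-step triangle inequality; the care is in the modeling points that keep the bound meaningful. One is that the contact point is defined to be the lowest vertex of the convex hull, which is what makes it legitimate to invoke the Hausdorff estimate at the single points $e$ and $a$ rather than over whole surfaces, and this is what keeps the constant at $2d_H$. The other is that reducing the attained infimum to an honest vertex $\hat p_i$ — the center of the ball $B_{d_H}$ — relies on the finiteness of ${^W\hat\P}$, and this is precisely the structure the planner exploits: it enumerates the finitely many simplified-hull vertices with $\hat p_{iz}\le a_z + 2d_H$ and covers each by a ball of radius $d_H$. If $d_H$ were instead taken as a surface-to-surface distance, a short extra remark would be needed to pass from a nearby surface point back to a nearby vertex (at the cost of enlarging the radius); with the vertex-set reading the argument closes cleanly.
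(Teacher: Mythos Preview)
Your proposal is correct and follows essentially the same route as the paper: use one direction of the Hausdorff bound at $a$ to get $e_z\le a_z+d_H$ (the paper writes this as $O_z\le a_z+d_H$ without justification), use the other direction at $e$ to place $e\in B_{d_H}(\hat p_i)$ for some $\hat p_i$, and chain the $z$-coordinate inequalities to obtain $\hat p_{iz}\le a_z+2d_H$. Your write-up is more explicit than the paper's about which direction of the Hausdorff distance is invoked at each step and about attaining the infimum at a vertex, but the argument is the same.
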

\begin{proof}
The actual height $O_z$ of the table surface is upper bounded by:
$$ O_z \le a_z + d_H. $$
From the definition of Hausdorff distance we know that points on $^W\P$ is bounded in balls of radius $d_H$ centered at points on $^W\hat\P$:
$$^W\P \in \left\{ {{B_{{d_H}}}({p_i})\ \ |\ \ {p_i} \in {^W\hat \P}} \right\}.$$
As a result, the balls that could possibly touch the table are centered at
$$\left\{ {{{\hat p}_i}\ |\ {{\hat p}_i} \in {^W\hat\P},{\hat p_{iz}} \le {O_z} + {d_H} \le {a_z} + 2{d_H}} \right\}. $$
\end{proof}
Theorem~\ref{thm:mesh_simplification} is useful for plan robustly against model simplification error.

There are many off-the-shelf algorithms for mesh simplification in the computer graphics community \cite{kalvin1996superfaces,klein1996mesh}. We use an algorithm similar to the one described in \cite{klein1996mesh} to produce simplified mesh under user-defined Hausdorff distance.


\section{Planning Under Modeling Uncertainties}
\label{sec:planning}
Now we are ready to design a reorienting planning algorithm that is robust to modeling errors.

Using pivoting, a parallel gripper can reorient an object without releasing the object. However, for some difficult reorienting problems, it may still be necessary for the robot to leave the object on some stable placements, change the grasp location then reorient again several times. This involves planning at different levels, we handle them separately.

\subsection{Reorienting Planning Given One Grasp Location} 
\label{sub:planning_with_one_grasp}
We firstly focus on the reorienting problems that can be solved with only one grasp location. Assume the choice of grasp $^Og \in\G$ is given. We now need to solve for a trajectory of continuous gripper motion as well as the discrete choices of modes, subjected to pivoting stability constraints, collision-avoidance (between object and gripper) constraints and robot workspace constraints. Discretize the trajectories into $N$ time steps, the planning problem takes form of a Mixed-Integer-Programming which suffers from combinatorial explosion of modes. However, we can efficiently find good solutions by decomposing the planning into four steps:
%
%
%
\subsubsection{\bf Plan for object rotation}
In the first step we only compute a trajectory of object rotations. To find the shortest path that connects the initial and final object orientations, we perform a spherical linear interpolation(SLERP) with $N$ steps to obtain a evenly-spaced rotation trajectory.

Some workspace constraints are only influenced by the object rotation, \eg the minimal distance between the fingertips and the table. We check the trajectory for these constraints, and declare infeasibility if a violation happens.

\subsubsection{\bf Choose motion primitives}
We choose pivoting over rolling whenever pivoting is stable.
The object orientation determines the relative position of point $O$, $C$ and $Q$, so we can check the stability of pivoting at each time step using theorem~\ref{thm:stability}, as shown in Figure~\ref{fig:pivoting_stability_robust}, left. Note that due to mesh simplification the contact point $O$ has uncertainty that can be computed from theorem~\ref{thm:mesh_simplification}. The measurement uncertainty of the center of mass brings uncertainty into $C$. Perception errors in the object initial pose will pass on to $Q$. To handle these uncertainties, we declare a pose to be stable for pivoting only if theorem~\ref{thm:stability} is satisfied under all possible locations of $O$, $C$ and $Q$, as shown in Figure~\ref{fig:pivoting_stability_robust}, right. The robust version of condition~\ref{eq:stability_1} is:
\begin{equation}
  (Q_y-OC^{(\max)}_y)\cdot(Q_y-OC^{(\min)}_y) > 0,
\end{equation}
where
\begin{equation}
  \begin{array}{*{20}{l}}
  OC^{(\max)}_y = \max\{{\rm range}(O_y), {\rm range}(C_y)\}, \\
  OC^{(\min)}_y = \min\{{\rm range}(O_y), {\rm range}(C_y)\}.
  \end{array}
\end{equation}

In this way, we also handle multiple contacts naturally. If there are more than one contact points between the object and the table in the pivoting plane, all of these points will be considered in ${\rm range}(O_y)$.

\begin{figure}[h]
    \centering
    \includegraphics[width=0.5\textwidth]{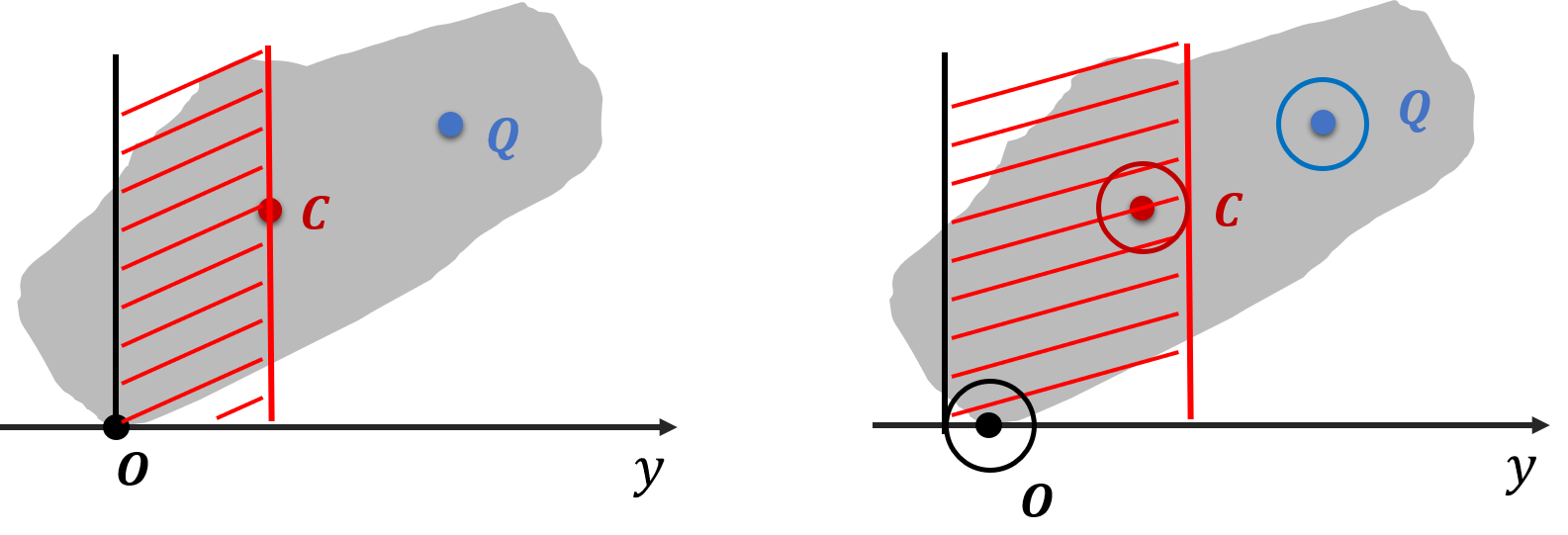}
    \caption{Illustration of the stability condition in theorem~\ref{thm:stability}. Left figure shows the condition without considering any modeling uncertainties. The point $Q$ need to be outside of the red shaded region. Right figure shows the conservative condition that works with uncertainties in $Q$, $O$ and $C$. The range of possible $Q$ should not intersect with the red shaded region. }
    \label{fig:pivoting_stability_robust}
\end{figure}

Note that if we only choose to do rolling, the method reduces to a pick-and-place reorienting. This is why the solutions of our method is a super set of the pick-and-place solutions.

\subsubsection{\bf Plan for gripper rotation}
With the object rotation trajectory fixed, the orientation of the gripper has one degree of freedom: we need to compute the rotation of the gripper about the grasp axis. Denote $\alpha_i$ as the rotation angle from the $Z$ axis of the pivoting plane frame to the $Z$ axis of the grasp frame at time step $i$, i.e. when $\alpha_i=0$, the tilting angle of the gripper is at the minimal. We need to compute the trajectory $\alpha = [\alpha_1, \dots, \alpha_N]$.

The rotation of the gripper has several constraints. First of all, the 3D orientation of the gripper must satisfy the tilting cone limit. The intersection between the pivoting plane and the 3D cone with gripper tilting angle limit $\theta_{\max}$ as half aperture gives a 2D cone on the pivoting plane, which forms a (symmetric) upper and lower bound on $\alpha$: $-\beta_{\rm cone} \le \alpha \le \beta_{\rm cone}$. The bound may be different at each time step, since the grasp axis is rotating with the object.

The second constraint comes from collision avoidance between the gripper and the object. Any part of the gripper except the fingertips should not contact the object during the robot motion.  To efficiently check the collision between the object and the gripper, we compute for each grasp $^Og\in\G$ the range of collision-free angle in the object frame during off-line computation. The constraint provides also an upper and a lower bound on $\alpha$ at each time step: $\beta_{\rm collision}^{(\min)} \le \alpha \le \beta_{\rm collision}^{(\max)}$.

The third constraint comes from the fact that the gripper has to rotate along with the object when it is not pivoting. Denote ${\bf S}\in\mathbb{R}^{N-1\times N-1}$ as a selection matrix with only 1 and 0 on its diagonal, it selects the entries of $\alpha_{i+1} - \alpha_{i}$ that correspond to firm grasps. Denote ${\bf U}=[0,\ {\bf I}_{N-1}], {\bf L}=[{\bf I}_{N-1},\ 0]$, we can express the vector of $\alpha_{i+1} - \alpha_{i}$ for $i=1, ..., N-1$ as $({\bf U}-{\bf L})\alpha$. Denote $\beta_{\rm object}\in\mathbb{R}^{N-1}$ as the array of object incremental rotations measured in the pivoting plane, we can express this constraint as a linear equality on $\alpha$:
\begin{equation}
  {\bf S}({\bf U}-{\bf L})\alpha=\beta_{\rm object}
\end{equation}

We can illustrate the gripper rotation planning problem in figure~\ref{fig:gripper_planning}, which shows the gripper rotation angle $\alpha$ and all kinds of constraints at each time step. Our task is to find a path from time step one to time step $N$, while staying within the tilting angle limit (black dotted lines), avoiding collision region (red shaded areas), and follow object rotation within the rolling zones (blue shaded areas).
\begin{figure}[h]
    \centering
    \includegraphics[width=\columnwidth]{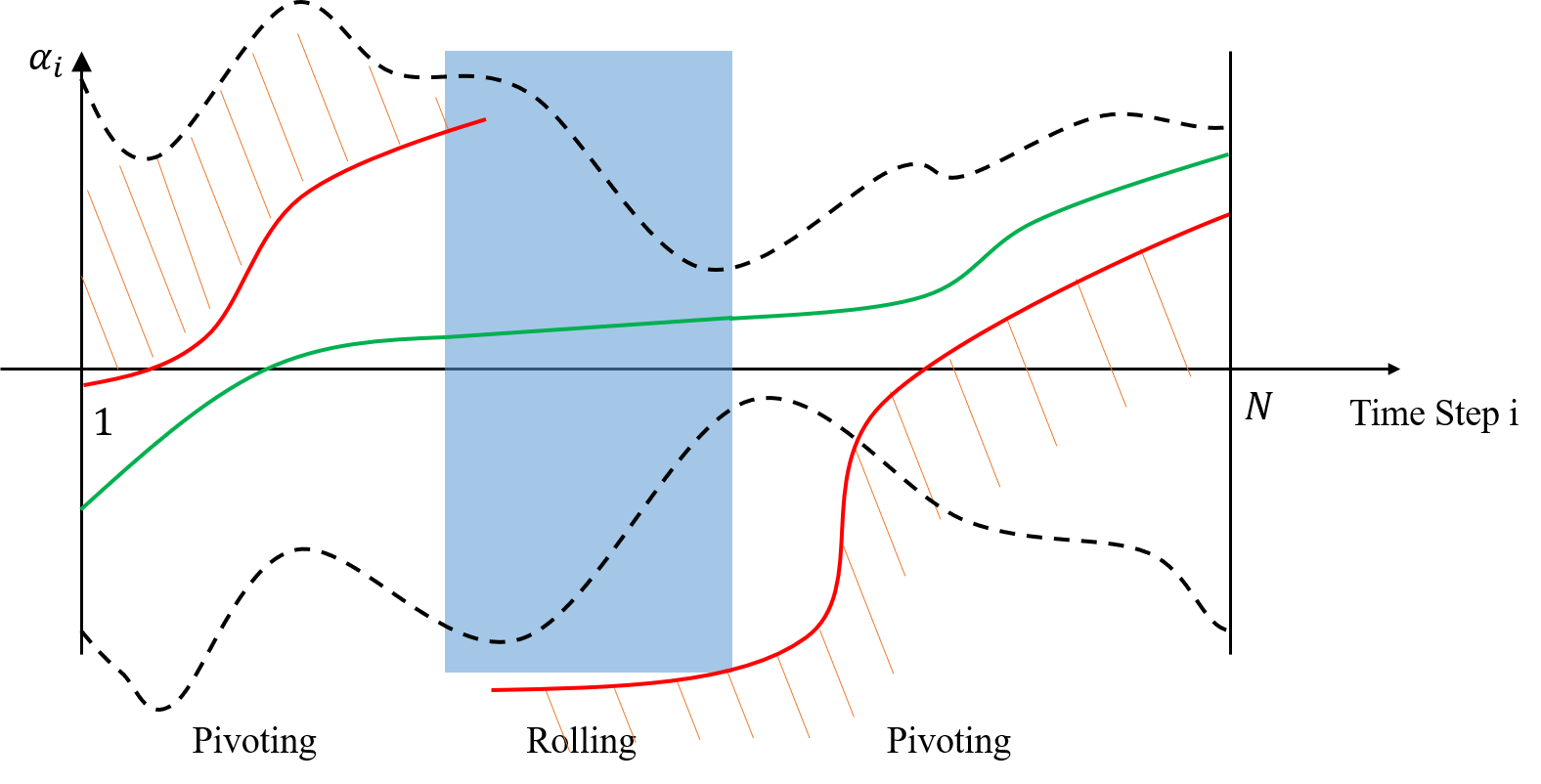}
    \caption{Illustration of the gripper rotation planning problem in the gripper-object rotation space. An example solution is shown as the green line.}
    \label{fig:gripper_planning}
\end{figure}

There are usually infinite many solutions that satisfies all the constraints. We find a unique one by applying two preferences: the solution should involve less robot motion; the gripper should stay close to the upright pose $\alpha=0$. Denote ${\bf Q}\in\mathbb{R}^{N\times N} = ({\bf U}-{\bf L})^T({\bf U}-{\bf L})$, we can formulate an optimization problem for $\alpha$:
\begin{equation}
  \label{eq:optimization_for_alpha}
  \begin{array}{*{20}{c}}
{\mathop {\min }\limits_\alpha  }&{{\alpha ^T}{\bf Q}\alpha  + k{\alpha ^T}\alpha }\\
{{\rm s}{\rm .t}{\rm .}}&{ - {\beta _{{\rm cone}}} \le \alpha  \le {\beta _{{\rm cone}}}}\\
{}&{{\beta _{{\rm collision}}^{(\min)}} \le \alpha  \le {\beta _{{\rm collision}}^{(\max)}}}\\
{}&{{\bf S}({\bf U} - {\bf L})\alpha  = {\beta _{{\rm object}}}}.
\end{array}
\end{equation}
The optimization formulation in (\ref{eq:optimization_for_alpha}) is a quadratic programing (QP), which can be solved efficiently. Finally we can compute the gripper orientation $^Wq_{{\rm grp}, i}$ for each time step $i$:
\begin{equation}
  \label{eq:compute_gripper_q_from_alpha}
  {^Wq}_{{\rm grp}, i} = {\rm aa2quat}(\alpha_i, {^Wx}_{{\rm ppf},i}) {^Wq}_{{\rm ppf}, i},
\end{equation}
where the function ${\rm aa2quat}(Angle, Axis)$ computes the quaternion for the rotation of angle $Angle$ about axis $Axis$.
${^Wx}_{{\rm ppf},i}, {^Wq}_{{\rm ppf}, i}$ denote the X axis and quaternion of the pivoting plane frame (i.e. the grasp axis) at time step $i$ respectively.
\subsubsection{\bf Plan for gripper translation}
Finally we compute the horizontal translations of the gripper, described by the horizontal velocity of the gripper frame $v_{{\rm grp}, i}\in \mathbb{R}^2$. The motion $v_{{\rm grp}, i}$ needs to satisfy three constraints. Firstly, the gripper must stay in the workspace bounding box:
\begin{equation}
  \label{eq:xy_constraint_boundingbox}
  {^Wp}^{(\min)}_{\rm xy} \le \sum_{i=1}^j {Tv_{{\rm grp}, i}} \le {^Wp}^{(\max)}_{\rm xy},\;\;\;\; j=1,\dots,N.
\end{equation}

Secondly, the motion must ends at the final gripper pose $^W\pgrp\uf$ (computed from the desired object pose):
\begin{equation}
  \label{eq:xy_constraint_final_pose}
  \sum_{i=1}^N {Tv_{{\rm grp}, i}} = ^Wp_{{\rm grp},xy}\uf - ^Wp_{{\rm grp},xy}\ui.
\end{equation}

Finally, the feasible direction of the gripper translation is limited. Denote $\delta_{OQ,i}\in\mathbb{R}^2$ as the change of gripper position in XY with respect to the contact $O$ computed from object rotations, we have the relation between $v_{{\rm grp}, i}$ and contact velocity $v_{O,i}$:
\begin{equation}
  \label{eq:velocity_Q_and_O}
  Tv_{{\rm grp}, i} = Tv_{O,i} + \delta_{OQ,i},\;\;\;\;\forall i.
\end{equation}
From theorem~\ref{thm:sliding} we know there are three possible situations for constraints on $v_{O,i}$:
\begin{enumerate}
  \item (Rolling) The object may not be able to slide.
  \item (Pivoting) The object can slide such that $O$ moves towards $Q$.
  \item (Pivoting) The object can slide in all directions.
\end{enumerate}
Denote the set of time step $i$s for each situation as $\mathbb{S}_1, \mathbb{S}_2$ and $\mathbb{S}_3$. For $i\in\mathbb{S}_1$, the gripper motion should make sure the contact $O$ does not move in the world frame:
\begin{equation}
  \label{eq:xy_constraint_sticking}
  v_{O,i} = v_{{\rm grp}, i} - \delta_{OQ,i}/T = 0,\;\;\;\;\forall i\in\mathbb{S}_1.
\end{equation}
For $i\in\mathbb{S}_2$, we have a half plane constraint on $v_{O,i}$. For the sake of being robust to modeling errors, we shrink the half plane into a cone:
\begin{equation}
  \begin{array}{rl}
    \xi\overrightarrow{OQ}_i^Tv_{O,i} &\ge \overrightarrow{OT}_i^Tv_{O,i}, \\
    \xi\overrightarrow{OQ}_i^Tv_{O,i} &\ge -\overrightarrow{OT}_i^Tv_{O,i}.
  \end{array}
\end{equation}
Here $\xi$ is a positive constant for adjusting the shape of the cone, $\overrightarrow{OQ}_i$ is an unit vector that parallels the $Y$ axis of the pivoting plane frame and points from $O$ towards $Q$. $\overrightarrow{OT}_i$ is an unit vector in the table plane, and is vertical to $\overrightarrow{OQ}_i$. Rewrite this constraint in terms of $v_{{\rm grp}, i}$ using equation~\ref{eq:velocity_Q_and_O}:
\begin{equation}
  \label{eq:xy_constraint_cone}
  \begin{array}{ll}
    (\xi\overrightarrow{OQ}_i - \overrightarrow{OT}_i)^T(v_{{\rm grp}, i} - \delta_{OQ,i}/T)&\ge 0, \forall i\in\mathbb{S}_2, \\
    (\xi\overrightarrow{OQ}_i + \overrightarrow{OT}_i)^T(v_{{\rm grp}, i} - \delta_{OQ,i}/T)&\ge 0, \forall i\in\mathbb{S}_2.
  \end{array}
\end{equation}
For $i\in\mathbb{S}_3$, there are no additional constraints.

Denote $v_{\rm grp} \in \mathbb{R}^{2N}$ as a concatenated vector of all the $v_{{\rm grp}, i}$.
We find a unique solution of $v_{\rm grp} \in \mathbb{R}^{2N}$ by minimizing the gripper motion distance:
\begin{equation}
  \label{eq:xy_cost_function}
  {\mathop {\min }\limits_{v_{\rm grp}}}\;\;\;\; \sum_{i=1}^{N} v_{{\rm grp}, i}^Tv_{{\rm grp}, i},
\end{equation}
subject to the constraints (\ref{eq:xy_constraint_boundingbox})(\ref{eq:xy_constraint_final_pose})(\ref{eq:xy_constraint_sticking}) and (\ref{eq:xy_constraint_cone}). This is again a QP problem which can be solved efficiently.

The complete procedure for solving reorienting given one grasp location is summarized in algorithm~\ref{alg:planning_one_grasp}.

\begin{algorithm}[h]
\caption{Reorienting planning given one grasp}  \label{alg:planning_one_grasp}
\begin{algorithmic}[1]
\renewcommand{\algorithmicrequire} {\textbf{Input :} }
\REQUIRE $N, {^Wq}_{\rm obj}\ui, {^Wq}_{\rm obj}\uf, {^Wp}_{\rm obj}\ui, {^Wp}_{\rm obj}\uf, ^O\pfingerl, ^O\pfingerr$.
\STATE Do a SLERP between ${^Wq}_{\rm obj}\ui, {^Wq}_{\rm obj}\uf$ to compute ${^Wq}_{{\rm obj},i}, \;\; i=1, \dots, N$.
\FOR {each $i\in\{1,\;...\;N\}$}
\IF {Any workspace constraint is violated by ${^Wq}_{{\rm obj},i}$}
\RETURN Infeasible.
\ENDIF
\STATE Compute pivoting plane frame orientation $^Wq_{{\rm ppf},i}$ from $^O\pfingerl, ^O\pfingerr, {^Wq}_{{\rm obj},i}$
\STATE Compute $C$ and $Q$ in the pivoting plane.
\STATE Choose motion primitive $m_i$ using theorem~\ref{thm:stability}.
\STATE Compute the tilting angle limit in pivoting plane $\beta_{{\rm cone}, i}$.
\STATE Compute the collision-free region in pivoting plane $\beta_{{\rm collision}, i}^{(\min)}, \beta_{{\rm collision}, i}^{(\max)}$.
\IF {$[-\beta_{{\rm cone}, i}, \beta_{{\rm cone}, i}] \cap [\beta_{{\rm collision}, i}^{(\min)}, \beta_{{\rm collision}, i}^{(\max)}] = \emptyset$}
\RETURN Infeasible.
\ENDIF
\STATE Compute incremental motion $\beta_{{\rm object}, i}$ and $\delta_{OQ,i}$.
\STATE Compute $\overrightarrow{OQ}_i$, then $\overrightarrow{OT}_i=\overrightarrow{OQ}_i\times [0\;0\;1]^T$.
\ENDFOR
\STATE Solve QP problem (\ref{eq:optimization_for_alpha}) for gripper angle trajectory $\alpha$. Return `infeasible' if there is no solution.
\STATE Compute gripper 3D orientation trajectory from $\alpha$ using equation~(\ref{eq:compute_gripper_q_from_alpha}), with $^Wx_{\rm ppf}$ computed from $^Wq_{\rm ppf}$.
\STATE Compute $v_{\rm grp}$ by solving the QP problem (\ref{eq:xy_constraint_boundingbox})(\ref{eq:xy_constraint_final_pose})(\ref{eq:xy_constraint_sticking})(\ref{eq:xy_constraint_cone}) and (\ref{eq:xy_cost_function}). Return `infeasible' if no solution.
\STATE Compute $^W\pgrp$ by integrating $v_{\rm grp}$.
\RETURN $^W\qgrp$, $^W\pgrp$, $m$.
\end{algorithmic}
\end{algorithm}
%

\subsection{Search for A Sequence of Reorientings} 
\label{sub:search_for_a_sequence_of_regrasps}
Sometimes we cannot reorient the object to the goal pose using only one grasp location. In such case it is necessary to place the object down on a stable placement and reorient again with a different grasp location, and repeat this process when necessary. We find such multi-step reorienting plans by building and searching on a graph of object stable placements, we call it \textit{Stable Placements Graph}.

\begin{figure}[h]
    \centering
    \includegraphics[width=0.45\textwidth]{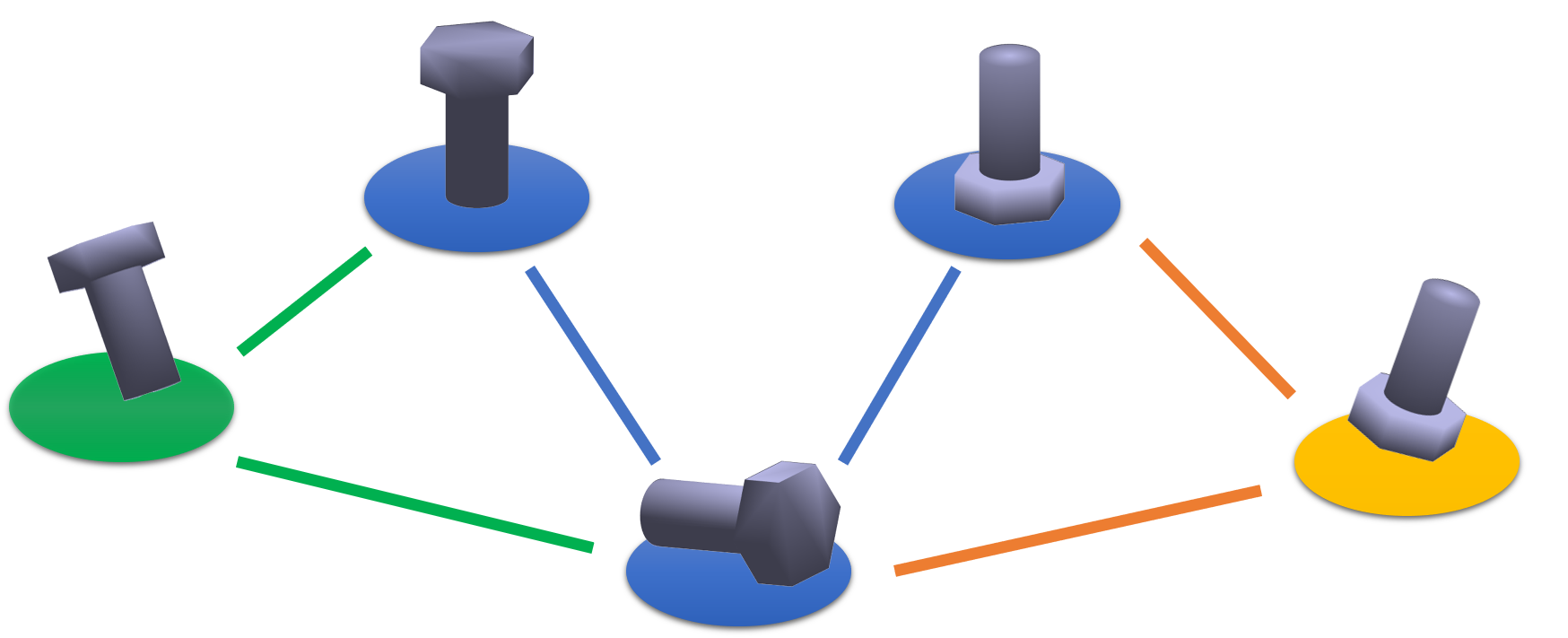}
    \caption{Example of the stable placement graph. For a screw, there are eight stable placements. Here we show three of them as three blue nodes. Given a reorienting problem, two nodes (green and yellow) are added to the graph representing the initial and goal object poses. Each edge means the two nodes share some common grasp locations. }
    \label{fig:stable_placement_graph}
\end{figure}

Our graph search idea is built upon Wan's \textit{Regrasp Graph}\cite{wan2015improving}, in which a graph connecting different object-gripper poses is computed off-line. The graph was then used for searching pick-and-place based motion plans. With pivoting being available, our stable placements graph improves Wan's work in two ways:

Firstly, each node on our graph corresponds to a unique object stable placement instead of a specific grasp pose. We no longer need to discretize gripper angles, thus remove the second layer in Wan's \textit{Regrasp Graph}. We end up with a much smaller graph. We connect two nodes in the stable placements graph as long as they have common grasp locations, rather than common grasp poses.

Secondly, we add the user specified initial and final object poses to the graph as the starting and ending node during planning time. Since our graph is very small, it takes no time to check the connection between the new nodes and the existed nodes. So this step adds little time to the online computation.

Algorithm \ref{alg:offline} summarizes the off-line computation for creating the graph.
$\rm MeshSim$ denotes the mesh simplification described in Section \ref{sub:simplified_mesh_model}.
$\rm GraspSample$ and $\rm GetStable$ are the same as in \cite{wan2015improving}: $\rm GraspSample$ samples anti-podal grasp locations on the object surface; $\rm GetStable$ finds stable placements of the object by checking whether the gravity projection is inside of the support polygon for each surface on its convex hull.

\begin{algorithm}[ht]
\caption{Off-line Computation For Each Object} \label{alg:offline}
\begin{algorithmic}[1]
\renewcommand{\algorithmicrequire} {\textbf{Input :} }
\REQUIRE Object mesh model $\bf M$
\STATE $\P \gets {\rm convexHall}({\bf M})$
\STATE ${\hat \P} \gets {\rm MeshSim}({\P})$
\STATE Sample grasp points: $\G \gets {\rm GraspSample}({\bf M})$
\STATE Compute stable placements: ${\V} \gets {\rm GetStable}({\hat \P})$
\FOR {each $g\in \G$}
  \STATE Compute collision-free angles for grasp $g$
\ENDFOR
\FOR {each $v\in \V$}
  \STATE Find the set of feasible grasp positions $\G_v\in\G$
\ENDFOR
\STATE Compute the connectivity matrix $\bf A$ for $\V$.
\RETURN $\G$, $\V$, $\bf A$.
\end{algorithmic}
\end{algorithm}

The searching algorithm for multi-step reorienting plans is shown in Algorithm \ref{alg:graph_search}.
Function $\rm OneGrasp$ denotes our planning method for one grasp location (Section \ref{sub:planning_with_one_grasp}).

We use Dijkstra to find the shortest path on the graph, minimizing the number of grasp changes.
For each edge on the path, if the destination node is not the goal node, we can choose $^Wp_{\rm obj}^{(f)}$ and $^Wq_{\rm obj}^{(f)}$ freely for that node. To encourage efficient solutions and stay within workspace constraint, we set $^Wp_{\rm obj}^{(f)}$ to be the center of the workspace bounding box, pick $^Wq_{\rm obj}^{(f)}$ as the closest orientation for that stable placement. Then we run algorithm~\ref{alg:planning_one_grasp} for each available grasps and pick the one with shortest gripper motion. If none of the grasps has a solution, we remove this edge from the stable placements graph and run graph search again.

\begin{algorithm}[ht]
\caption{Online Searching} \label{alg:graph_search}
\begin{algorithmic}[1]
\renewcommand{\algorithmicrequire} {\textbf{Input :} }
\REQUIRE $\qobj\ui,\qobj\uf,\pobj\ui,\pobj\uf, \G, \V, {\bf A}$
\STATE ${\V} \gets \{\V, v\ui, v\uf\}$, where $v\ui=\qobj\ui, v\uf=\qobj\uf$.
\STATE Find feasible grasp $\G\ui, \G\uf\in\G$ for new nodes.
\STATE Update $\bf A$ for $\V$ based on common grasps.
\WHILE{ true}
  \STATE ${\rm path}\gets {\rm Dijkstra}({\bf A}, v\ui, v\uf)$
  \IF {${\rm path} = \emptyset$}
    \RETURN $\emptyset$
  \ENDIF
  \STATE ${\rm plan} \gets \emptyset$
  \FOR {each edge $e_{jk}\in {\rm path}$}
    \STATE $\G_{jk}\gets\G_j\cap\G_k$,
    \STATE ${\rm plan}_{jk}\gets\emptyset$.
    \FOR {each $g\in \G_{jk}$}
      \STATE ${\rm sol} \gets{\rm OneGrasp}(g,\qobj\uj,\qobj\uk,\pobj\uj,\pobj\uk)$.
      \STATE ${\rm plan}_{jk} \gets \{{\rm plan}_{jk}, sol\}$.
    \ENDFOR
    \IF {${\rm plan}_{jk}\ne\emptyset$}
      \STATE ${\rm plan} \gets [{\rm plan}\ {\rm plan}_{jk}]$
    \ELSE
      \STATE Remove $e_{jk}$ from $\bf A$
      \STATE ${\rm plan} \gets \emptyset$ and \textbf{break}
    \ENDIF
  \ENDFOR
  \IF {${\rm plan} \ne \emptyset$}
    \RETURN  ${\rm plan}$.
  \ENDIF
\ENDWHILE
\end{algorithmic}
\end{algorithm}


\section{HARDWARE IMPLEMENTATION} 
\label{sec:hardware_implementation}
\subsection{Gripper Design}
The analysis above has the following assumptions on the gripper hardware:
\begin{itemize}
	\item The gripper can switch between firm grasp and pivoting;
	\item In pivoting, the object only rotates about the grasp axis without tangential slipping.
\end{itemize}
These requirements can be met by installing customized fingers on off-the-shelf parallel grippers or pinch grippers, making a ``two-phase gripper'' (\cite{chavan2015two-phase,terasaki1998motion}). We propose a simple finger design for two-phase grippers, as shown in figure~\ref{fig:two_phase_gripper}. Each of the finger has a rotational shaft installed on a ball bearing parallel to the grasp axis. The shaft has two degrees of freedom: rotation about the axis, and translation along the axis. A round, thin piece of high friction rubber is installed on the shaft as the fingertip, which moves along with the shaft. The other side of the fingertip is also stuffed with rubber. This side could touch the rough surface (filing paper) on the finger and create a high friction contact. When the finger is open, the shaft is pushed away from the rough surface from the other end by a spring-levering mechanism.

\begin{figure}[t]
    \centering
    \includegraphics[width=\columnwidth]{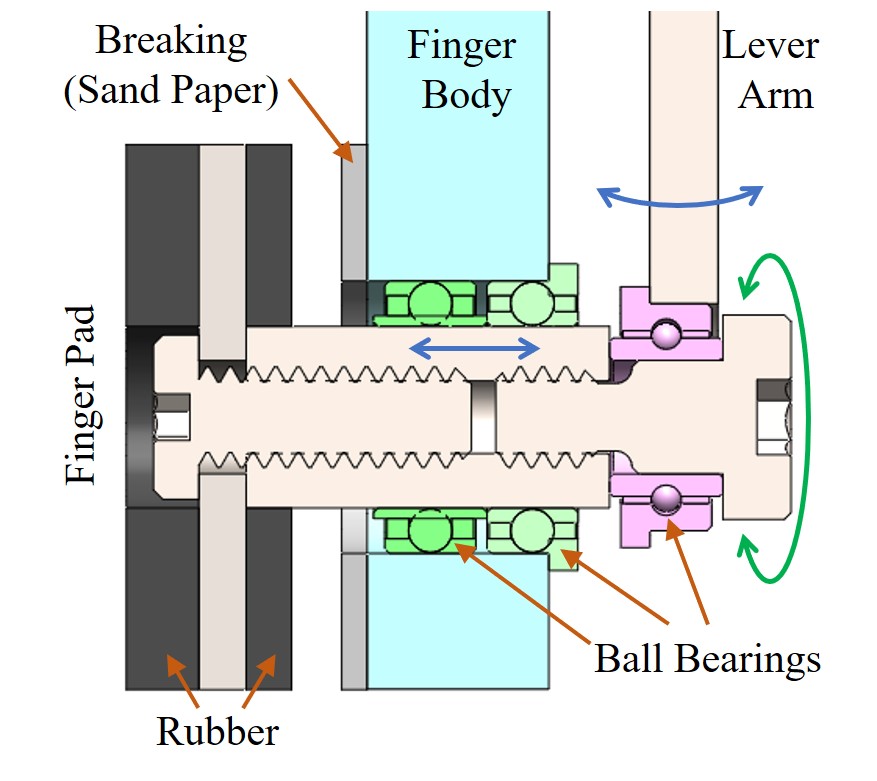}
    \includegraphics[width=\columnwidth]{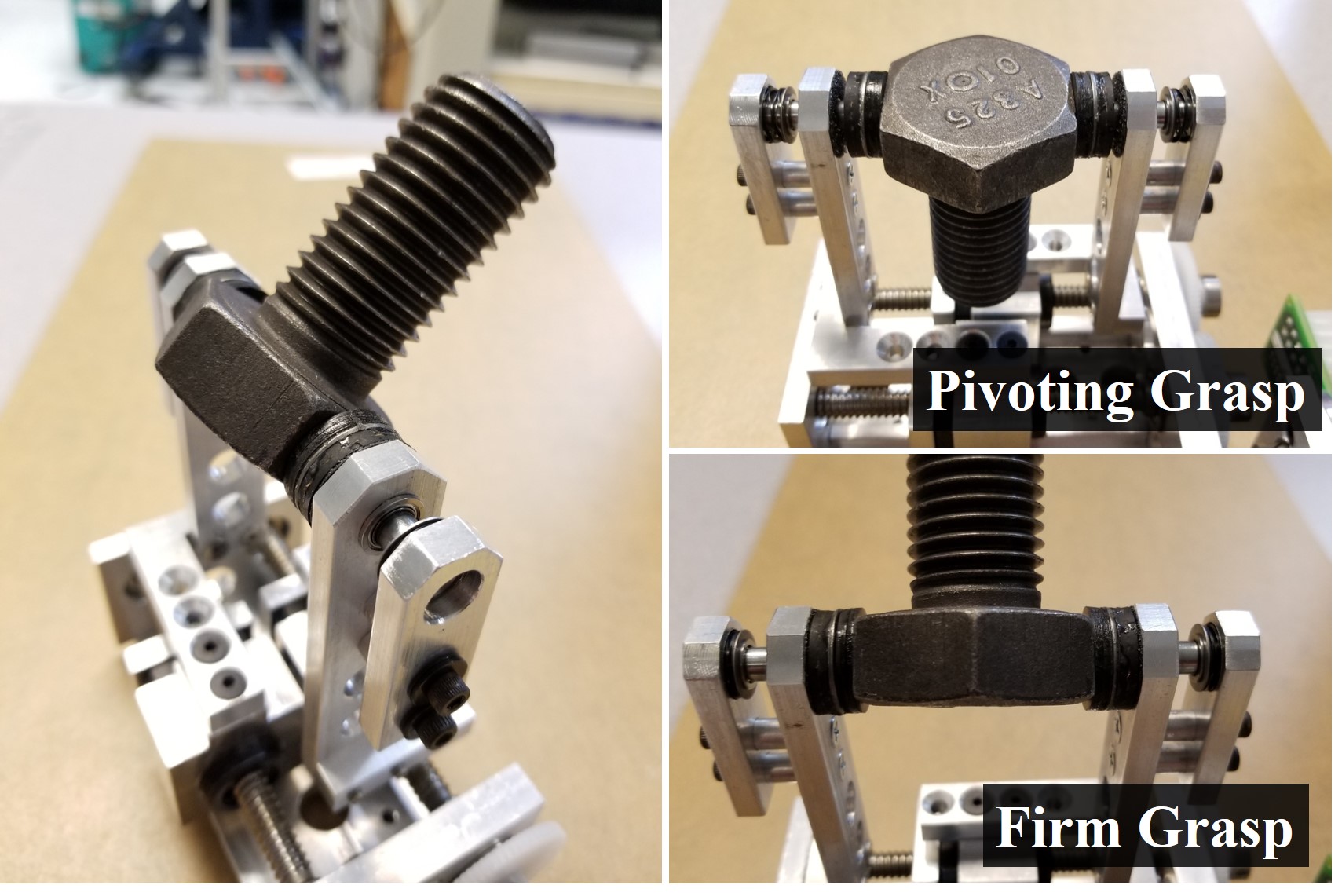}
    \caption{Our customized gripper. Upper: the structure of the two-phase finger. Lower: photo of the gripper grasping an object.}
    \label{fig:two_phase_gripper}
\end{figure}

The structures described above form a rotation mechanism and a braking system. When the finger grasps an object, the object can rotate with the shaft freely. The rotational friction is low due to the ball bearing. If the gripper continues to close the finger, the shaft will be pushed off until the back of the fingertips touches the rough surface, at which point the rotation is braked down.

Similar to existed two-phase gripper designs (\cite{chavan2015two-phase,terasaki1998motion}), our gripper achieve grasp mode switching by changing grasp force without another motor. The spring maps grasp force to grasp width. A simple control strategy is to first grasp firmly and record the grasp width, then the gripper can switch to pivoting by open the finger by half of the shaft travel.

Our design is unique in that it maintains the same contact patches on the object during each grasp mode and the switching between modes. The contact is always sticking. Without change of contact patches or contact modes, we eliminate a source of uncertainty.

One important characteristic of two-phase grippers is the range of gripping force for the rotation mode. The force has a lower bound $f_{\rm g}^{(\min)}$ to avoid tangential slip; and a upper bound $f_{\rm g}^{(\max)}$ for switching to firm grasp mode. The range $f_{\rm g}^{(\max)} - f_{\rm g}^{(\min)}$ should not be too small, otherwise the actual gripper mode will be sensitive to noises in gripper force. In our design, we make $f_{\rm g}^{(\min)}$ low by choosing high friction material for the fingertips, make $f_{\rm g}^{(\max)}$ high by choosing springs with high stiffness.

To reorient an object, it is important that the gripper can approach many different grasp locations. To approach the grasp locations that are close to the table, the overall finger width in the grasp axis direction must be as small as possible. Comparing with our previous design \cite{Hou2018FastPlanning}, we reduce the fingertip width by adopting a lever system for the spring, instead of placing the spring directly on the grasp axis.

\subsection{Execute Reorienting with Hybrid Force-velocity Control}
During roll-on-support, the object pose is over-constrained by the table and the gripper. During pivot-on-support, the object pose is also over-constrained if the contact on the table is sticking. To avoid crushing the object, we can not control the velocity of all six joints of the robot simultaneously.

Instead, we adopt hybrid force-velocity control to execute the two motion primitives. During rolling, the gripper performs force control in the Z direction with a certain force to maintain the contact between the table, while all the other five DOFs (3D orientation and XY translation) are executed exactly by velocity control. Together with one constraint on the Z direction from the table contact (there is no XY constraint since the normal force is limited by force control), the object pose is determined without conflicts.

During pivoting, the situation is more complicated. If the table contact is sliding, the gripper should only perform velocity control. In this way the gripper imposes five constraints on the object, together with the table Z constraint they uniquely determine the object pose. If the table contact is sticking, we do force control in Z direction instead.

Our hybrid force-velocity control is implemented on a position-controlled industrial robot arm with wrist mounted force-torque sensor \cite{Maples1986Experiments}.



\section{Evaluation} 
\label{sec:evaluation}

\subsection{Simulation and Comparison with Pick-and-place} 
\label{sub:simulation_and_comparison_with_pick_place}
We simulate reorienting tasks on 12 objects with non-trivial shapes (over 2000 facets per object on average) obtained from Dex-Net \cite{mahler2017dex}, as shown in figure~\ref{fig:objects}. In off-line computation, we sample at most 50 grasp positions for each object. After trimming similar grasps, we keep 10 to 40 grasps for planning.

\begin{figure}[h]
    \centering
    \includegraphics[width=0.4\textwidth]{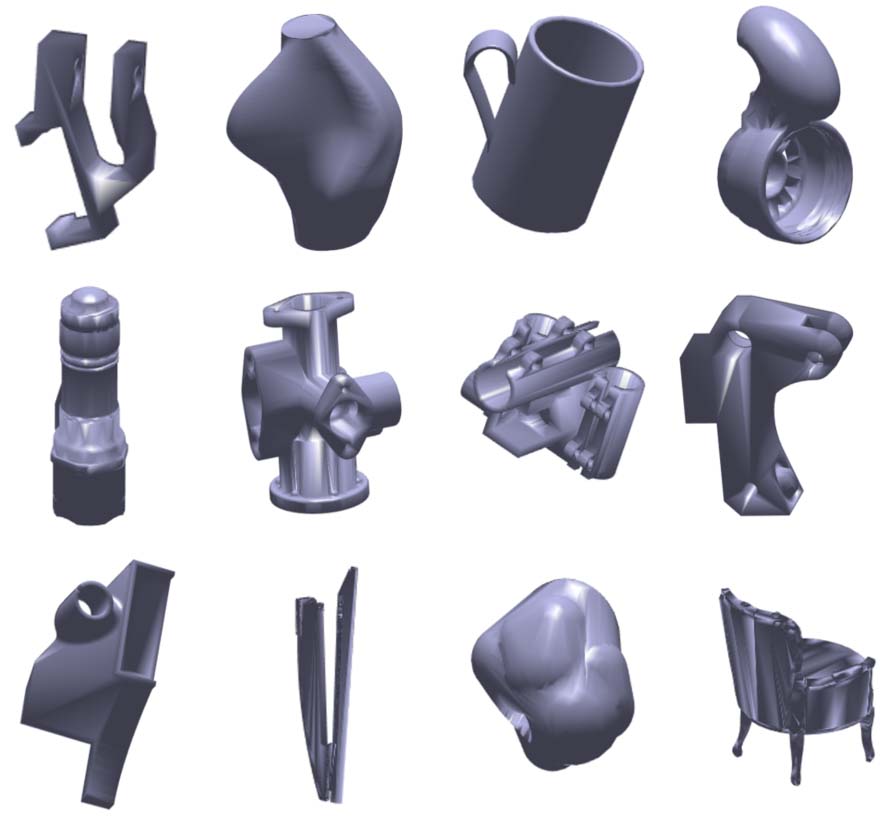}
    \caption{The 12 objects used in our simulation.}
    \label{fig:objects}
    \vspace{-0.3cm}
\end{figure}

The workspace bounding box is a $300mm\times300mm$ rectangle region. We scale each object to fit into an $100mm\times100mm\times100mm$ cube.
For each object, we create 100 reorienting problems by sampling 100 pairs of initial and final object poses with feasible grasp locations.

To evaluate the algorithm performance under different workspace constraints, we run the 1200 sample problems multiple times, with tilting angle limit $\theta_{\max}$ ranging from 10 to 80 degrees. For comparison, we also implement a pick-and-place based method by always avoiding pivoting in algorithm~\ref{alg:planning_one_grasp}. The performance of our method and pick-and-place are shown in figure~\ref{fig:simulations}. Notice that our method can solve more problems under all conditions (Figure~\ref{fig:simulation_result1}.

Figure~\ref{fig:simulation_result2} shows the average execution time of the solved motion plans generated by both methods. The robot end-effector maximum velocity is limited by $0.1{\rm m/s}$ for translation and $35{\rm deg/s}$ for rotation, under which our motion plan takes around seven seconds to execute (the green line). Notice that results of our method include solutions for more challenging problems. To make a fair comparison, we also show the average execution time for the problems that can be solved by both methods, shown by the orange line and the blue line, respectively. Our method is slightly more efficient comparing with pick-and-place.

Figure~\ref{fig:simulation_result3} shows the average computation time taken by each method to solve a reorienting problem. We haven't optimize the code for speed; the data are measured on a desktop with Intel Xeon 3.10GHz CPU running single-thread Matlab.
The average computation times for solving one problem (or declare failure) are $1.8{\rm s}$ and $0.96{\rm s}$ for our method and pick-and-place, respectively. The off-line computation described in algorithm~\ref{alg:offline} takes several minutes per object, depending on the complexity of the object mesh.

\begin{figure}[t]
    \centering
    \begin{subfigure}[b]{\columnwidth}
    \centering
    \includegraphics[width=\textwidth]{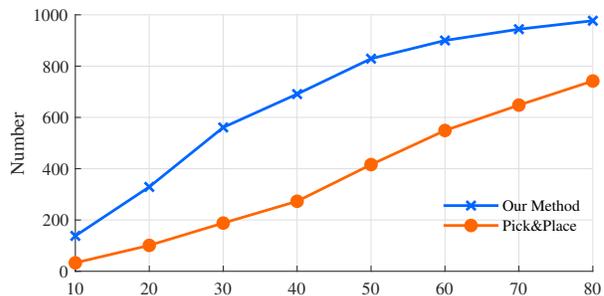}
    \caption{Number of solved problems. There are 1200 problems in total.}
    \label{fig:simulation_result1}
    \end{subfigure}
    \begin{subfigure}[b]{\columnwidth}
    \centering
    \includegraphics[width=\textwidth]{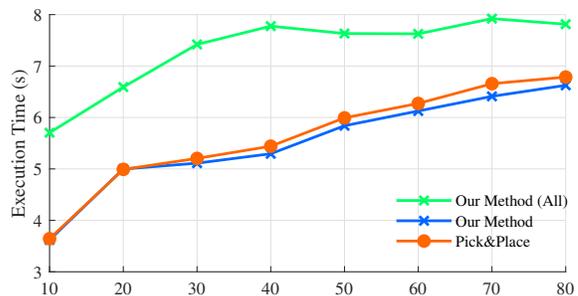}
    \caption{Average length of motion plans for solved problems, described by execution time. Green and orange lines show the results for our method and pick-and-place method, respectively. Blue line shows our method, but only counts the problem that can also be solved by pick-and-place. }
    \label{fig:simulation_result2}
    \end{subfigure}
    \begin{subfigure}[b]{\columnwidth}
    \centering
    \includegraphics[width=\textwidth]{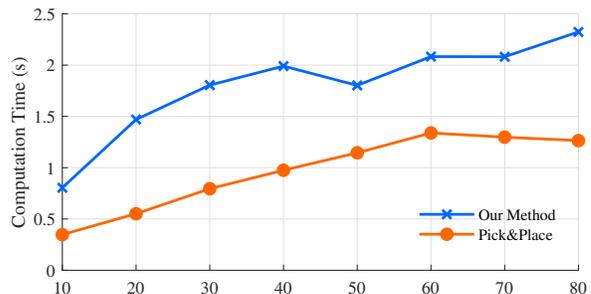}
    \caption{Average computation time per solved problem.}
    \label{fig:simulation_result3}
    \end{subfigure}
    \caption{Simulation results. In all the sub-figures, the horizontal axes are the tilting angle limits in degrees.}
    \label{fig:simulations}
\end{figure}

\subsection{Experiments} 
\label{sub:experiments}
We test our method with an ABB IRB120 industrial robot and the two-phase gripper described in \ref{sec:hardware_implementation}. To implement multi-step motion plans, we obtain object 3D pose feedback from vision before each grasp. The vision system includes two Intel RealSense D415 RGBD cameras. There is no vision feedback when the fingers are on the object.

\begin{figure}[t]
    \centering
    \includegraphics[width=0.9\columnwidth]{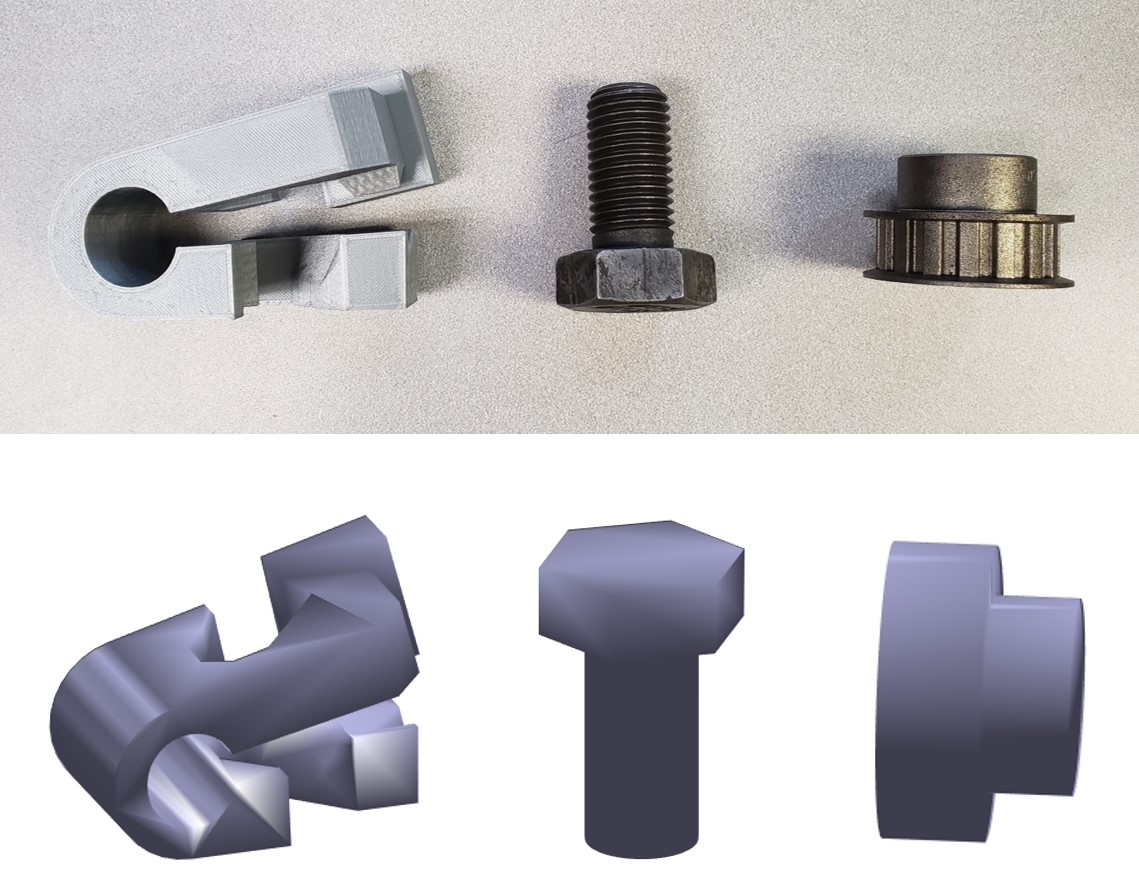}
    \caption{First row: objects used for experiments. Second row: corresponding mesh model used for planning.}
    \label{fig:experiment_objects}
\end{figure}

Figure~\ref{fig:experiment_objects} shows the objects we used in experiments, including a metal screw, a metal pulley wheel and a 3D printed clamp from DexNet dataset. For the two real objects, we use rough models for perception and planning.
Solutions of one example problem for each object are shown in figure~\ref{fig:experiment_1},\ref{fig:experiment_2} and \ref{fig:experiment_3}.
More experiments can be found in the supplementary video.

\begin{figure*}[t]
    \centering
    \includegraphics[width=\textwidth]{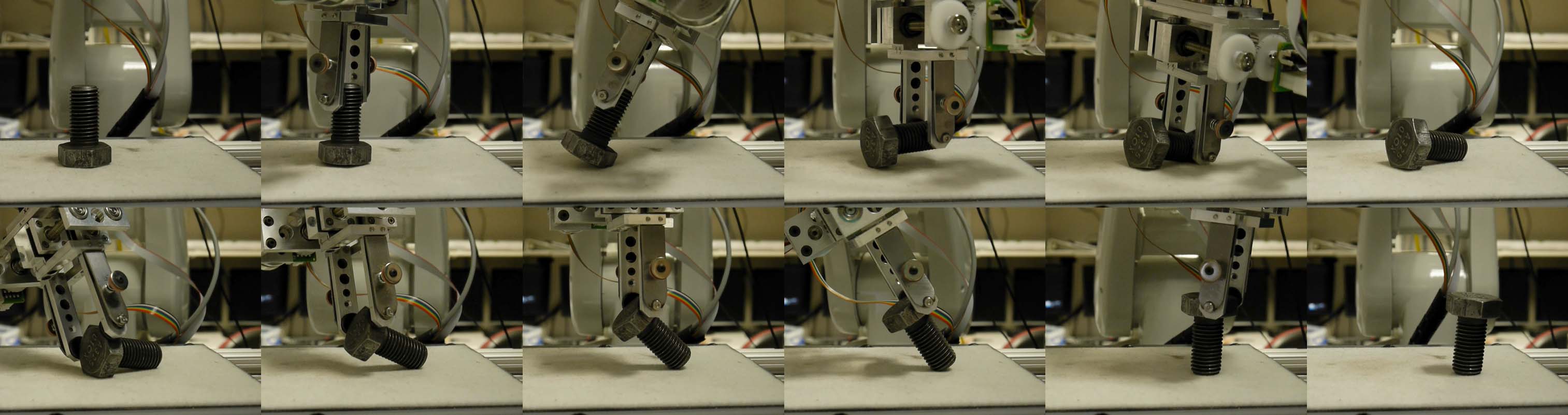}
    \caption{Snapshots of reorienting a screw. The motion plan contains two steps, as shown in the two rows respectively.}
    \label{fig:experiment_1}
\end{figure*}

\begin{figure*}[t]
    \centering
    \includegraphics[width=\textwidth]{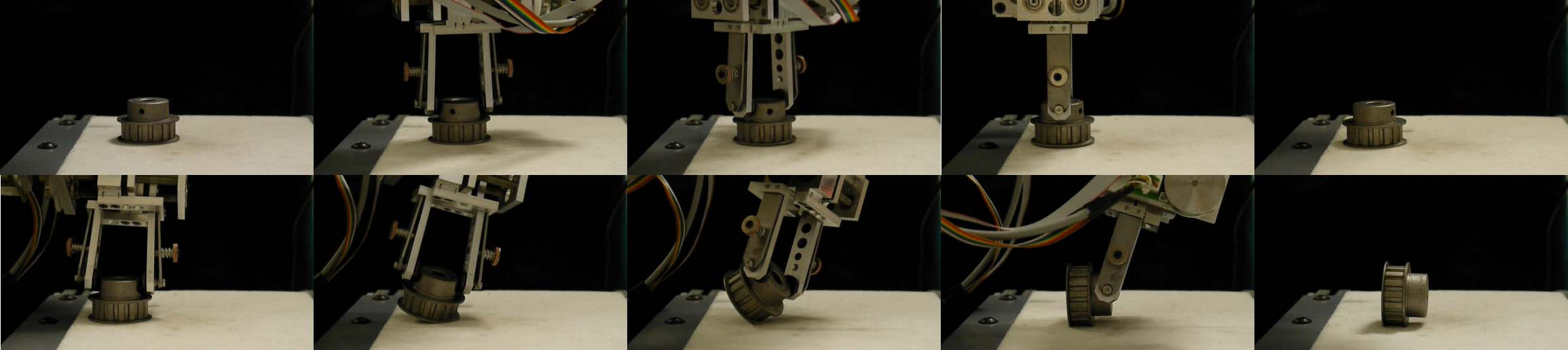}
    \caption{Snapshots of reorienting a pulley wheel. The motion plan contains two steps, as shown in the two rows respectively.}
    \label{fig:experiment_2}
\end{figure*}

\begin{figure*}[t]
    \centering
    \includegraphics[width=\textwidth]{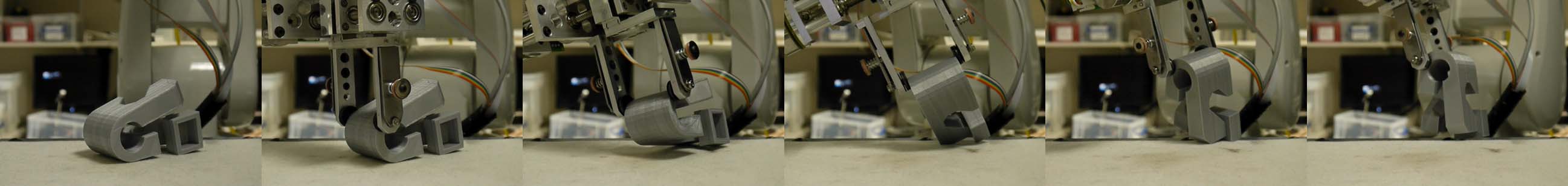}
    \caption{Snapshots of reorienting a plastic clamp.}
    \label{fig:experiment_3}
\end{figure*}

\subsection{Failures and future work} 
The experiment can fail in several ways. The most common type of failure is unexpected slipping at the grasp location, which is assumed not to happen in planning. The problem can be fixed by increasing the maximum resistance force at the finger (use stronger gripper, higher frictional fingertips), or considering tangential force limit constraint in planning. It's also helpful to implement better force control, since the large tangential force is often caused by the variations in force tracking.

The other main reason for failure is wrong estimation of the friction coefficient between the object and the table. When our estimation is off too much, theorem~\ref{thm:stability} fails to predict the stability of pivoting. After we tune and obtain a more accurate friction coefficient, this problem no longer happens. In the future, an online estimation algorithm for updating friction parameter and closed-loop control may solve this issue.

\section*{Acknowledgment}
The authors would like to thank Nikil Chavan-Dafle, Fran{\c{c}}ois Hogan, Mohamed Raessa and Weiwei Wan for helpful discussions.

\ifCLASSOPTIONcaptionsoff
  \newpage
\fi

\bibliographystyle{plain}
\bibliography{refs}

\begin{thebibliography}{10}

\bibitem{Inaba_pivoting}
Y.~Aiyama, M.~Inaba, and H.~Inoue.
\newblock Pivoting: A new method of graspless manipulation of object by robot
  fingers.
\newblock In {\em Proceedings of the 1993 IEEE/RSJ International Conference on
  Intelligent Robots and Systems}, volume~1, pages 136--143 vol.1.

\bibitem{ambler1973versatile}
A~Patricia Ambler, Harry~G Barrow, Christopher~M Brown, RH~Burstall, and
  Robin~J Popplestone.
\newblock A versatile computer-controlled assembly system.
\newblock In {\em Proceedings of the 3rd international joint conference on
  Artificial intelligence}, pages 298--307. Morgan Kaufmann Publishers Inc.,
  1973.

\bibitem{brock}
David~L Brock.
\newblock Enhancing the dexterity of a robot hand using controlled slip.
\newblock In {\em 1988 IEEE International Conference on Robotics and
  Automation}, pages 249--251, 1988.

\bibitem{cao2016analyzing}
Chao Cao, Weiwei Wan, Jia Pan, and Kensuke Harada.
\newblock Analyzing the utility of a support pin in sequential robotic
  manipulation.
\newblock In {\em Robotics and Automation (ICRA), 2016 IEEE International
  Conference on}, pages 5499--5504. IEEE, 2016.

\bibitem{cao2016empirical}
Chao Cao, Weiwei Wan, Jia Pan, and Kensuke Harada.
\newblock An empirical comparison among the effect of different supports in
  sequential robotic manipulation.
\newblock In {\em Intelligent Robots and Systems (IROS), 2016 IEEE/RSJ
  International Conference on}, pages 2548--2553. IEEE, 2016.

\bibitem{carlisle1994pivoting}
Brian Carlisle, Ken Goldberg, Anil Rao, and Jeff Wiegley.
\newblock A pivoting gripper for feeding industrial parts.
\newblock In {\em ICRA}, pages 1650--1655, 1994.

\bibitem{prehensile}
N.~Chavan-Dafle and A.~Rodriguez.
\newblock Prehensile pushing: In-hand manipulation with push-primitives.
\newblock In {\em 2015 IEEE/RSJ International Conference on Intelligent Robots
  and Systems}, pages 6215--6222.

\bibitem{chavan2018hand}
Nikhil Chavan-Dafle, Rachel Holladay, and Alberto Rodriguez.
\newblock In-hand manipulation via motion cones.
\newblock In {\em Robotics: Science and Systems (RSS)}, 2018.

\bibitem{chavan2018pneumatic}
Nikhil Chavan-Dafle, Kyubin Lee, and Alberto Rodriguez.
\newblock Pneumatic shape-shifting fingers to reorient and grasp.
\newblock {\em arXiv preprint arXiv:1809.08420}, 2018.

\bibitem{chavan2015two-phase}
Nikhil Chavan-Dafle, Matthew~T Mason, Harald Staab, Gregory Rossano, and
  Alberto Rodriguez.
\newblock A two-phase gripper to reorient and grasp.
\newblock In {\em Automation Science and Engineering (CASE), 2015 IEEE
  International Conference on}, pages 1249--1255. IEEE, 2015.

\bibitem{chavan2017sampling}
Nikhil Chavan-Dafle and Alberto Rodriguez.
\newblock Sampling-based planning of in-hand manipulation with external pushes.
\newblock {\em arXiv preprint arXiv:1707.00318}, 2017.

\bibitem{cho2003complete}
Kyoungrae Cho, Munsang Kim, and Jae-Bok Song.
\newblock Complete and rapid regrasp planning with look-up table.
\newblock {\em Journal of Intelligent \& Robotic Systems}, 36(4):371--387,
  2003.

\bibitem{cruciani2019hand}
Silvia Cruciani, Yin Hang, and Danica Kragic.
\newblock In-hand manipulation of objects with unknown shapes.
\newblock 2019.

\bibitem{extrinsic}
N.~C. Dafle, A.~Rodriguez, R.~Paolini, B.~Tang, S.~S. Srinivasa, M.~Erdmann,
  M.~T. Mason, I.~Lundberg, H.~Staab, and T.~Fuhlbrigge.
\newblock Extrinsic dexterity: In-hand manipulation with external forces.
\newblock In {\em 2014 IEEE International Conference on Robotics and Automation
  (ICRA)}, pages 1578--1585, 2014.

\bibitem{holladay2015pivot}
Anne Holladay, Robert Paolini, and Matthew~T Mason.
\newblock A general framework for open-loop pivoting.
\newblock In {\em 2015 IEEE International Conference on Robotics and Automation
  (ICRA)}, pages 3675--3681.

\bibitem{hou2016robust}
Yifan Hou, Zhenzhong Jia, Aaron~M Johnson, and Matthew~T Mason.
\newblock Robust planar dynamic pivoting by regulating inertial and grip
  forces.
\newblock In {\em The 12th International Workshop on the Algorithmic
  Foundations of Robotics (WAFR)}. Springer, 2016.

\bibitem{hou2018fast}
Yifan Hou, Zhenzhong Jia, and Matthew~T Mason.
\newblock Fast planning for 3d any-pose-reorienting using pivoting.
\newblock In {\em 2018 IEEE International Conference on Robotics and Automation
  (ICRA)}, pages 1631--1638. IEEE, 2018.

\bibitem{Hou2018FastPlanning}
Yifan Hou, Zhenzhong Jia, and Matthew~T Mason.
\newblock Fast planning for 3d any-pose-reorienting using pivoting.
\newblock In {\em International Conference on Robotics and Automation (ICRA)
  2018}. IEEE Robotics and Automation Society (RAS), May 2018.

\bibitem{kalvin1996superfaces}
Alan~D Kalvin and Russell~H Taylor.
\newblock Superfaces: Polygonal mesh simplification with bounded error.
\newblock {\em IEEE Computer Graphics and Applications}, 16(3):64--77, 1996.

\bibitem{klein1996mesh}
Reinhard Klein, Gunther Liebich, and Wolfgang Stra{\ss}er.
\newblock Mesh reduction with error control.
\newblock In {\em Proceedings of the 7th conference on Visualization'96}, pages
  311--318. IEEE Computer Society Press, 1996.

\bibitem{levine2016end}
Sergey Levine, Chelsea Finn, Trevor Darrell, and Pieter Abbeel.
\newblock End-to-end training of deep visuomotor policies.
\newblock {\em The Journal of Machine Learning Research}, 17(1):1334--1373,
  2016.

\bibitem{lozano1987handey}
Tomas Lozano-Perez, J~Jones, Emmanuel Mazer, P~O'Donnell, W~Grimson, Pierre
  Tournassoud, and Alain Lanusse.
\newblock Handey: A robot system that recognizes, plans, and manipulates.
\newblock In {\em Robotics and Automation. Proceedings. 1987 IEEE International
  Conference on}, volume~4, pages 843--849. IEEE, 1987.

\bibitem{lozano2014constraint}
Tom{\'a}s Lozano-P{\'e}rez and Leslie~Pack Kaelbling.
\newblock A constraint-based method for solving sequential manipulation
  planning problems.
\newblock In {\em Intelligent Robots and Systems (IROS 2014), 2014 IEEE/RSJ
  International Conference on}, pages 3684--3691. IEEE, 2014.

\bibitem{mahler2017dex}
Jeffrey Mahler, Jacky Liang, Sherdil Niyaz, Michael Laskey, Richard Doan, Xinyu
  Liu, Juan~Aparicio Ojea, and Ken Goldberg.
\newblock Dex-net 2.0: Deep learning to plan robust grasps with synthetic point
  clouds and analytic grasp metrics.
\newblock {\em arXiv preprint arXiv:1703.09312}, 2017.

\bibitem{Maples1986Experiments}
J.~Maples and J.~Becker.
\newblock Experiments in force control of robotic manipulators.
\newblock In {\em Proceedings. 1986 IEEE International Conference on Robotics
  and Automation}, volume~3, pages 695--702, Apr 1986.

\bibitem{mordatch2012contact}
Igor Mordatch, Zoran Popovi{\'c}, and Emanuel Todorov.
\newblock Contact-invariant optimization for hand manipulation.
\newblock In {\em Proceedings of the ACM SIGGRAPH/Eurographics symposium on
  computer animation}, pages 137--144. Eurographics Association, 2012.

\bibitem{mordatch2012discovery}
Igor Mordatch, Emanuel Todorov, and Zoran Popovi{\'c}.
\newblock Discovery of complex behaviors through contact-invariant
  optimization.
\newblock {\em ACM Transactions on Graphics (TOG)}, 31(4):43, 2012.

\bibitem{instant_insanity_video1971}
Richard Paul, Karl Pingle, Jerome Feldman, and Alan Kay.
\newblock Instant insanity video from stanford artificial intelligence
  laboratory (sail).

\bibitem{pinto2016supersizing}
Lerrel Pinto and Abhinav Gupta.
\newblock Supersizing self-supervision: Learning to grasp from 50k tries and
  700 robot hours.
\newblock In {\em Robotics and Automation (ICRA), 2016 IEEE International
  Conference on}, pages 3406--3413. IEEE, 2016.

\bibitem{posa2014direct}
Michael Posa, Cecilia Cantu, and Russ Tedrake.
\newblock A direct method for trajectory optimization of rigid bodies through
  contact.
\newblock {\em The International Journal of Robotics Research}, 33(1):69--81,
  2014.

\bibitem{rao1996complete}
Anil Rao, David~J Kriegman, and Kenneth~Y Goldberg.
\newblock Complete algorithms for feeding polyhedral parts using pivot grasps.
\newblock {\em IEEE Transactions on Robotics and Automation}, 12(2):331--342,
  1996.

\bibitem{saut2010planning}
Jean-Philippe Saut, Mokhtar Gharbi, Juan Cort{\'e}s, Daniel Sidobre, and
  Thierry Sim{\'e}on.
\newblock Planning pick-and-place tasks with two-hand regrasping.
\newblock In {\em Intelligent Robots and Systems (IROS), 2010 IEEE/RSJ
  International Conference on}, pages 4528--4533. IEEE, 2010.

\bibitem{lynchsliding}
Jian Shi, J~Zachary Woodruff, and Kevin~M Lynch.
\newblock Dynamic in-hand sliding manipulation.
\newblock In {\em 2015 IEEE/RSJ International Conference on Intelligent Robots
  and Systems}, pages 870--877, 2015.

\bibitem{Sintov_Robotic_2016}
Avishai Sintov, Or~Tslil, and Amir Shapiro.
\newblock Robotic {Swing-Up} regrasping manipulation based on the
  {Impulse–Momentum} approach and {cLQR} control.
\newblock {\em Ieee T Robot}, 32(5):1079--1090, 2016.

\bibitem{stoeter1999planning}
Sascha~A Stoeter, Stephan Voss, Nikolaos~P Papanikolopoulos, and Heiko
  Mosemann.
\newblock Planning of regrasp operations.
\newblock In {\em Robotics and Automation, 1999. Proceedings. 1999 IEEE
  International Conference on}, volume~1, pages 245--250. IEEE, 1999.

\bibitem{terasaki1998motion}
Hajime Terasaki and Tsutomu Hasegawa.
\newblock Motion planning of intelligent manipulation by a parallel
  two-fingered gripper equipped with a simple rotating mechanism.
\newblock {\em IEEE Transactions on Robotics and Automation}, 14(2):207--219,
  1998.

\bibitem{tournassoud1987regrasping}
Pierre Tournassoud, Tom{\'a}s Lozano-P{\'e}rez, and Emmanuel Mazer.
\newblock Regrasping.
\newblock In {\em Robotics and Automation. Proceedings. 1987 IEEE International
  Conference on}, volume~4, pages 1924--1928. IEEE, 1987.

\bibitem{toussaint2018differentiable}
Marc Toussaint, Kelsey Allen, Kevin~A Smith, and Joshua~B Tenenbaum.
\newblock Differentiable physics and stable modes for tool-use and manipulation
  planning.
\newblock In {\em Robotics: Science and Systems}, 2018.

\bibitem{spoon2016adaptive}
Francisco~E Vi, Yiannis Karayiannidis, Christian Smith, Danica Kragic, et~al.
\newblock Adaptive control for pivoting with visual and tactile feedback.
\newblock In {\em 2016 IEEE International Conference on Robotics and Automation
  (ICRA)}, pages 399--406.

\bibitem{wan2015improving}
Weiwei Wan, Matthew~T Mason, Rui Fukui, and Yasuo Kuniyoshi.
\newblock Improving regrasp algorithms to analyze the utility of work surfaces
  in a workcell.
\newblock In {\em Robotics and Automation (ICRA), 2015 IEEE International
  Conference on}, pages 4326--4333. IEEE, 2015.

\bibitem{webb2013kinodynamic}
Dustin~J Webb and Jur Van Den~Berg.
\newblock Kinodynamic rrt*: Asymptotically optimal motion planning for robots
  with linear dynamics.
\newblock In {\em 2013 IEEE International Conference on Robotics and
  Automation}, pages 5054--5061. IEEE, 2013.

\bibitem{xue2008planning}
Zhixing Xue, J~Marius Zoellner, and Ruediger Dillmann.
\newblock Planning regrasp operations for a multifingered robotic hand.
\newblock In {\em Automation Science and Engineering, 2008. CASE 2008. IEEE
  International Conference on}, pages 778--783. IEEE, 2008.

\bibitem{yoshida2010pivoting}
Eiichi Yoshida, Mathieu Poirier, Jean-Paul Laumond, Oussama Kanoun, Florent
  Lamiraux, Rachid Alami, and Kazuhito Yokoi.
\newblock Pivoting based manipulation by a humanoid robot.
\newblock {\em Autonomous Robots}, 28(1):77--88, 2010.

\end{thebibliography}

\end{document}